\newtheorem{theorem}[]{Theorem}
\newtheorem{remark}[]{Remark}
\newcommand{\bo}{\mathbf}
\newcommand{\cl}{\mathcal}
\title{Stochastic Shared Embeddings: Data-driven Regularization of Embedding Layers}
\author{%
  Liwei Wu \\
  %\thanks{Use footnote for providing further information
  %  about author (webpage, alternative address)---\emph{not} for acknowledging
  %  funding agencies.} \\
  Department of Statistics\\
  University of California, Davis\\
  Davis, CA 95616 \\
  \texttt{liwu@ucdavis.edu} \\
  % examples of more authors
  \And
  Shuqing Li \\
  Department of Computer Science\\
  University of California, Davis\\
  Davis, CA 95616 \\
  \texttt{qshli@ucdavis.edu}
  \And
  Cho-Jui Hsieh \\ 
  Department of Computer Science\\
  University of California, Los Angles\\
  Los Angles, CA 90095 \\
  \texttt{chohsieh@cs.ucla.edu}
  \And
  James Sharpnack \\
  Department of Statistics\\
  University of California, Davis\\
  Davis, CA 95616 \\
  \texttt{jsharpna@ucdavis.edu} \\
}
\begin{document}
% \nipsfinalcopy is no longer used

\maketitle

\begin{abstract}
In deep neural nets, lower level embedding layers account for a large portion of the total number of parameters. Tikhonov regularization, graph-based regularization, and hard parameter sharing are approaches that introduce explicit biases into training in a hope to reduce statistical complexity. Alternatively, we propose stochastic shared embeddings (SSE), a data-driven approach to regularizing embedding layers, which stochastically transitions between embeddings during stochastic gradient descent (SGD). Because SSE integrates seamlessly with existing SGD algorithms, it can be used with only minor modifications when training large scale neural networks. We develop two versions of SSE: SSE-Graph using knowledge graphs of embeddings; SSE-SE using no prior information. We provide theoretical guarantees for our method and show its empirical effectiveness on 6 distinct tasks, from simple neural networks with one hidden layer in recommender systems, to the transformer and BERT in natural languages. We find that when used along with widely-used regularization methods such as weight decay and dropout, our proposed SSE can further reduce overfitting, which often leads to more favorable generalization results.    
\end{abstract}

\section{Introduction}
Recently, embedding representations have been widely used in almost all AI-related fields, from feature maps \cite{krizhevsky2012imagenet} in computer vision, to word embeddings \cite{mikolov2013distributed, pennington2014glove}  in natural language processing, to user/item embeddings \cite{mnih2008probabilistic, hu2008collaborative} in recommender systems. Usually, the embeddings are high-dimensional vectors. Take language models for example, in GPT \cite{radford2018improving} and Bert-Base model \cite{devlin2018bert}, 768-dimensional vectors are used to represent words. Bert-Large model utilizes 1024-dimensional vectors and GPT-2 \cite{radford2019language} may have used even higher dimensions in their unreleased large models. In recommender systems, things are slightly different: the dimension of user/item embeddings are usually set to be reasonably small, 50 or 100, but the number of users and items is on a much bigger scale. Contrast this with the fact that the size of word vocabulary that normally ranges from 50,000 to 150,000, the number of users and items can be millions or even billions in large-scale real-world commercial recommender systems \cite{bennett2007netflix}. 

Given the massive number of parameters in modern neural networks with embedding layers, mitigating over-parameterization can play an important role in preventing over-fitting in deep learning. 
We propose a regularization method, Stochastic Shared Embeddings (SSE), that uses prior information about similarities between embeddings, such as semantically and grammatically related words in natural languages or real-world users who share social relationships. Critically, SSE progresses by stochastically transitioning between embeddings as opposed to a more brute-force regularization such as graph-based Laplacian regularization and ridge regularization.
Thus, SSE integrates seamlessly with existing stochastic optimization methods and the resulting regularization is data-driven.
%To achieve the goal, we think we first need to understand the sharing mechanisms of embeddings for those semantically and grammatically related words in natural languages or real-world users who share complicated social relationships beyond the Internet.

% This paper is intended as a first try from the angle of stochastic sharing mechanisms and is ideologically very different from non-stochastic $l_2$ regularization (i.e. weight decay) \cite{krogh1992simple} and hard/soft parameter sharing \cite{nowlan1992simplifying}. We do not claim that our method to be the optimal solution to this problem, and to be honest, we do not think we have figured out what the ideal sharing mechanisms should be for embeddings. But we do think our proposed stochastically sharing among highly related embeddings is a reasonable solution, which makes a lot of senses and is strongly supported by empirical results. Moreover, it is very simple to use without the need to significantly modify the existing code bases.  

We will begin the paper with the mathematical formulation of the problem, propose SSE, and provide the motivations behind SSE. 
We provide a theoretical analysis of SSE that can be compared with excess risk bounds based on empirical Rademacher complexity.
%We see that our analogous notions of complexity for SSE may be significantly smaller than the Rademacher complexity because the transition probabilities have the effect of `smoothing' the complexity.
We then conducted experiments for a total of 6 tasks from simple neural networks with one hidden layer in recommender systems, to the transformer and BERT in natural languages and find that when used along with widely-used regularization methods such as weight decay and dropout, our proposed methods can further reduce over-fitting, which often leads to more favorable generalization results.  

\section{Related Work}
Regularization techniques are used  to control model complexity and avoid over-fitting. 
$\ell_2$ regularization~\cite{hoerl1970ridge} is the most widely used approach and has been used in many matrix factorization models in recommender systems; $\ell_1$ regularization~\cite{tibshirani1996regression} is used when a sparse model is preferred. For deep neural networks, it has been shown that $\ell_p$  regularizations are often too weak, while dropout~\cite{hinton2012improving,srivastava2014dropout} is more effective in practice.  There are many other regularization techniques, including parameter sharing \cite{goodfellow2016deep}, max-norm regularization \cite{srebro2005maximum}, gradient clipping \cite{pascanu2013difficulty}, etc.

Our proposed SSE-graph is very different from graph Laplacian regularization \cite{cai2011graph}, in which the distances of any two embeddings connected over the graph are directly penalized.
%, but there is no direct relationship between this and the final loss. 
Hard parameter sharing uses one embedding to replace all distinct embeddings in the same group, which inevitably introduces a significant bias. 
Soft parameter sharing \cite{nowlan1992simplifying} is similar to the graph Laplacian, penalizing the $l_2$ distances between any two embeddings. 
These methods have no dependence on the loss, while the proposed SSE-graph method is data-driven in that the loss influences the effect of regularization.
Unlike graph Laplacian regularization, hard and soft parameter sharing, our method is stochastic by nature.
This allows our model to enjoy similar advantages as dropout \cite{srivastava2014dropout}.

% One way to understand our SSE algorithm is that we are training exponentially many such neural networks at the same time, in which each network has an additional remapping layer. The function of such a layer is to map embeddings from the original embedding table to a new embedding table in a predefined and deterministic fashion. 
% However, it is obvious that directly optimizing the sum of losses of exponentially many networks is highly inefficient and computationally unfeasible. Motivated by the dropout paper, we can take advantage of stochasticity and solve the original neural network with a slight modification on the backpropagation procedure.

Interestingly, in the original BERT model's pre-training stage \cite{devlin2018bert}, a variant of SSE-SE is already implicitly used for token embeddings but for a different reason. 
In \cite{devlin2018bert}, the authors masked 15\% of words and 10\% of the time replaced the [mask] token with a random token. 
In the next section, we discuss how SSE-SE differs from this heuristic.
% That is roughly equivalent to a SSE probability of 0.015 for replacing input word-piece embeddings.
% We do not think the BERT authors explained the benefits accurately here.
% The authors stated their motivation of doing so is to mitigate a mismatch between pre-training and fine-tuning, since [mask] token is never seen in the fine-tuning stage. But we will argue the gain is due to the fact that replacing [mask] token with a random token has the same effects of replacing the tokens' embeddings as our SSE-SE does and this would allow us to mitigate over-parameterization in the embedding layer and to learn better embedding representations without the need to reduce the number of parameters of the model.
Another closely related technique to ours is the label smoothing \cite{szegedy2016rethinking}, which is widely used in the computer vision community.
We find that in the classification setting if we apply SSE-SE to one-hot encodings associated with output $y_i$ only, our SSE-SE is closely related to the label smoothing, which can be treated as a special case of our proposed method. 
 
% \section{Motivation}
% {\color{red}Cho: maybe we can combine motivation and formulations and maybe remove the story.}
% The most related works are weight decay, hard/soft parameter sharing and dropout. Both can be used to mitigate over-parameterization in embedding layer of neural nets. However, stochasticity is missing from weight decay, hard and soft parameter sharing, and we find existing regularization techniques are not sufficient to complete resolve over-parameterization in embedding layer. So we improve upon hard/soft parameter sharing and propose a new method called Stochastic Shared Embeddings (\PassOptionsToPackage{}{}SSE) to force embeddings to share common information stochastically.

% See the motivation section of dropout paper and come up with a good story.
% What about the story of Hua Mulan? 

\begin{algorithm}[tb]

  \caption{SSE-Graph for Neural Networks with Embeddings}
  \label{alg:see-graph}

\begin{algorithmic}[1]
  \STATE {\bfseries Input:} input $x_i$, label $y_i$, backpropagate $T$ steps, mini-batch size $m$, knowledge graphs on embeddings $\{E_1, \dots, E_M \}$
  \STATE Define $p_l(., . | \Phi)$ based on knowledge graphs on embeddings, $l = 1,\ldots, M$
  \FOR{$t=1$ {\bfseries to} $T$}
    \STATE Sample one mini-batch $\{x_1, \dots, x_m\}$
     \FOR{$i=1$ {\bfseries to} $m$}
        \STATE Identify the set of embeddings $\mathcal{S}_i = \{E_1[j^i_1], \dots, E_M[j^i_M]\}$ for input $x_i$ and label $y_i$
        \FOR{each embedding $E_l[j^i_l] \in \mathcal{S}_i$}
            \STATE  Replace $E_l[j^i_l]$ with $E_l[k_l]$, where $k_l \sim p_l(j^i_l, . | \Phi)$
        \ENDFOR
     \ENDFOR
    \STATE Forward and backward pass with the new embeddings
  \ENDFOR
  \STATE Return embeddings $\{E_1, \dots, E_M \}$, and neural network parameters $\Theta$
 \end{algorithmic}

\end{algorithm}

% Our method takes motivation from an ancient Chinese tale. In the tale, Hua Mulan, disguised as a man, takes her aged father's place in the army and becomes a legendary Chinese warrior without anyone knowing she is a girl and returns home when the war is won. Hua Mulan replaces her father but can still achieve the same task potentially because they share many characteristics with each other. If Hua Mulan is represented as an embedding and her father is as another embedding, then Hua Mulan temporarily uses her own embedding for the task in which her father's embedding is supposed to work best without sacrificing the quality of accomplishing the same task. It in fact turns out that she achieves the same task even better and becomes a legendary general. Does this mean sometimes replacing the embedding meant for the task with a different but similar embedding would improve the performance of achieving the same original task?

\section{Stochastic Shared Embeddings}
% \begin{itemize}
%     \item View matrix factorization as a simple example of one hidden layer neural network: 2 embedding tables, 2 embeddings each sgd update.
%     \item View BPR as another example of one hidden layer neural network but a different loss, 3 embeddings each sgd upate (maybe omit?).
%     \item View SASRec as 7 layers neural network with 2 transformer blocks, each Adam update involving $m \times T $ embeddings. 
% \end{itemize}

Throughout this paper, the network input $x_i$ and label $y_i$ will be encoded into indices $j_1^i,\ldots, j_M^i$ which are elements of $\mathcal I_1 \times \ldots \mathcal I_M$, the index sets of embedding tables.
A typical choice is that the indices are the encoding of a dictionary for words in natural language applications, or user and item tables in recommendation systems. 
Each index, $j_l$, within the $l$th table, is associated with an embedding $E_l[j_l]$ which is a trainable vector in $\mathbb R^{d_l}$.
The embeddings associated with label $y_i$ are usually non-trainable one-hot vectors corresponding to label look-up tables while embeddings associated with input $x_i$ are trainable embedding vectors for embedding look-up tables. 
In natural language applications, we appropriately modify this framework to accommodate sequences such as sentences.

The loss function can be written as the functions of embeddings:
\begin{equation}\label{eq:obj}
    R_n(\Theta) = \sum_i \ell(x_i, y_i | \Theta) = \sum_i \ell(E_1[j^i_1], \dots, E_M[j_M^i] | \Theta),
\end{equation}
%\vspace{-.1cm}
%{\color{red}(Cho: some problem in notation. $(e_j, \dots, e_k)$ should be depend on $i$ (otherwise $i$ does not appear in the function))}
%where $e_1(j_1^i), \dots, e_E(j_E^i) \in \mathbb{R}^{d_1} \times \ldots \times \mathbb{R}^{d_E}$ are embeddings associated with input $x_i$, 
where $y_i$ is the label and $\Theta$ encompasses all trainable parameters including the embeddings, $\{ E_l[j_l]: j_l \in \mathcal I_l\}$.
%and embedding indices $j_1^i, \ldots, j_E^i$ for the $E$ embedding tables. 
The loss function $\ell$ is a mapping from embedding spaces to the reals.
% and is parametrized by trainable parameters $\Theta$ such as weights and biases in neural networks.
For text input, each $E_l[j^i_l]$ is a word embedding vector in the input sentence or document. For recommender systems, usually there are two embedding look-up tables: one for users and one for items \cite{he2017neural}. So the objective function, such as mean squared loss or some ranking losses, will comprise both user and item embeddings for each input. 
We can more succinctly write the matrix of all embeddings for the $i$th sample as $\bo E[\bo j^i] = (E_1[j_1^i], \ldots, E_M[j_M^i])$ where $\bo j^i = (j^i_1,\ldots, j^i_M) \in \cl I$.
By an abuse of notation we write the loss as a function of the embedding matrix, $\ell(\bo E[\bo j^i] | \Theta)$.

\begin{figure}
  \centering
  \includegraphics[width=0.8\linewidth]{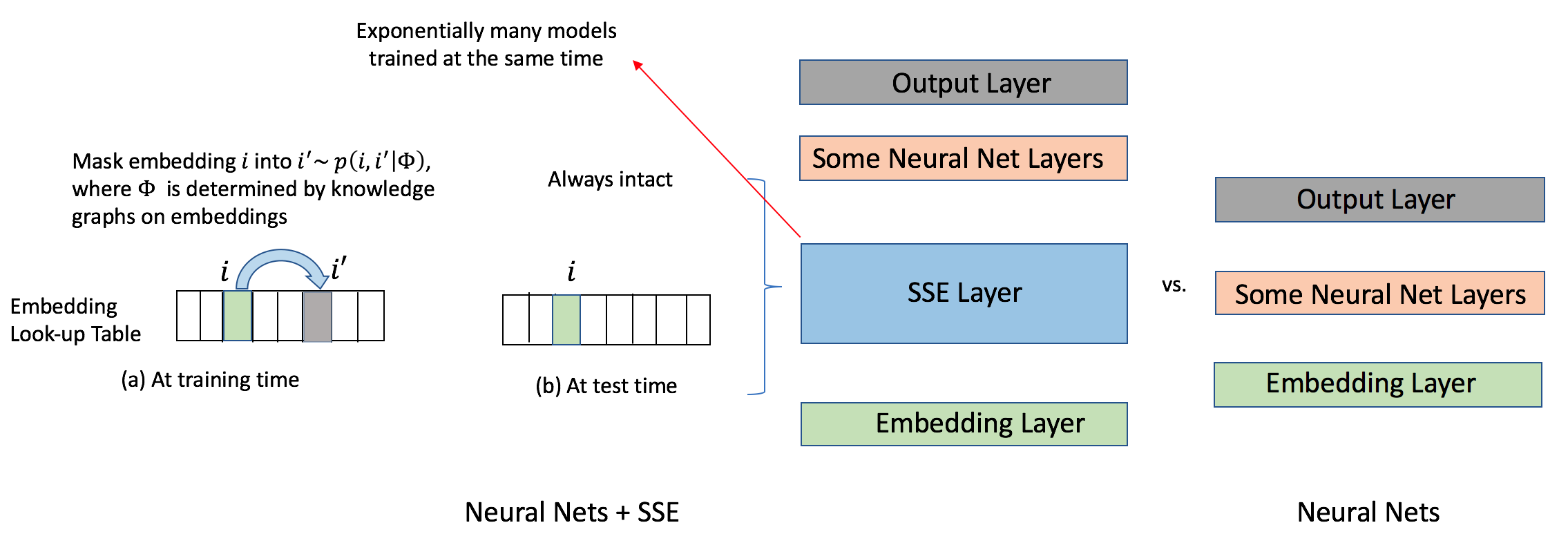}
  \vspace{-10pt}
  \caption{SSE-Graph described in Algorithm~\ref{alg:see-graph} and Figure~\ref{fig:sse-graph} can be viewed as adding exponentially many distinct reordering layers above the embedding layer. 
%   The function of such a reordering layer is to reorder an embedding look-up table defined in embedding layer into a different but sensible embedding table with some slices of embedding table replaced, before feeding into next layer. 
  A modified backpropagation procedure in Algorithm~\ref{alg:see-graph} is used to train exponentially many such neural networks at the same time.
  }
  \vspace{-12pt}
  \label{fig:train_test}
\end{figure}

%Add plot here. Copy SQ plot in SQL-Rank paper but in neural networks architecture like figure 1 in dropout paper. 
%\subsection{Motivations}

Suppose that we have access to knowledge graphs \cite{miller1995wordnet, lehmann2015dbpedia} over embeddings, and we have a prior belief that two embeddings will share information and replacing one with the other should not incur a significant change in the loss distribution. For example, if two movies are both comedies and they are starred by the same actors, it is very likely that for the same user, replacing one comedy movie with the other comedy movie will result in little change in the loss distribution. 
In stochastic optimization, we can replace the loss gradient for one movie's embedding with the other similar movie's embedding, and this will not significantly bias the gradient if the prior belief is accurate.
On the other hand, if this exchange is stochastic, then it will act to smooth the gradient steps in the long run, thus regularizing the gradient updates.

\begin{figure}
  \centering
  \includegraphics[width=0.835\linewidth]{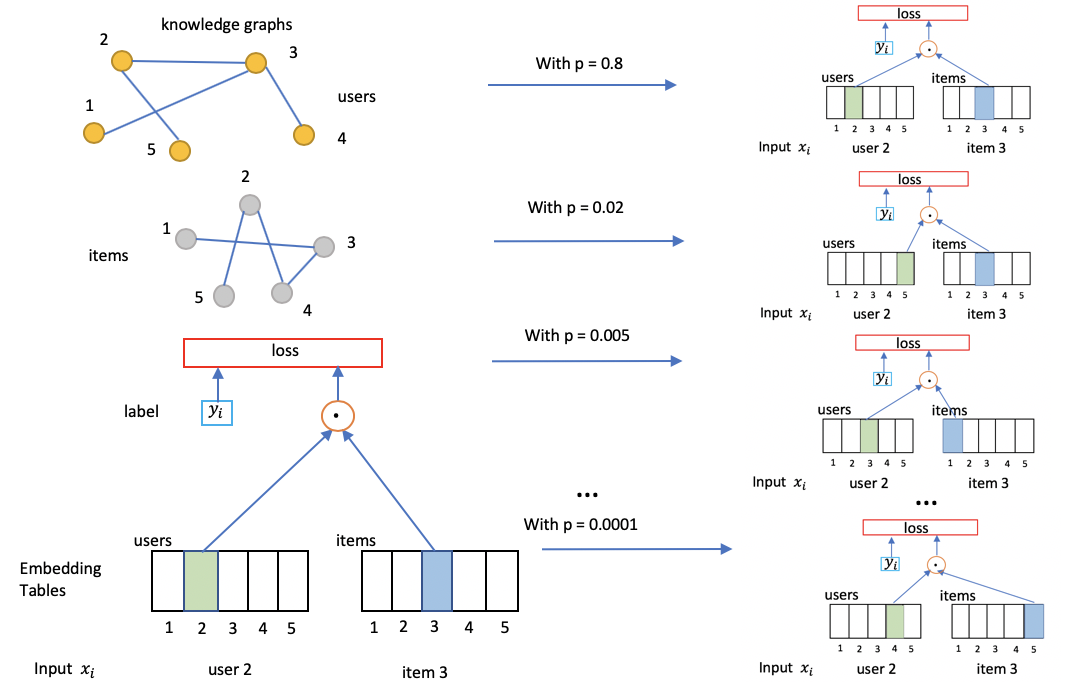}
  \caption{Illustration of how SSE-Graph algorithm in Figure~\ref{fig:train_test} works for a simple neural network.
%   The actual transition probabilities to be used is determined by the knowledge graphs and specific datasets and requires only minimal tuning.
  }\vspace{-12pt}
  \label{fig:sse-graph}
\end{figure}

\subsection{General SSE with Knowledge Graphs: SSE-Graph}
% Need to add plots here, like figure 2 and 3 in dropout paper. 

%Due to the large amount of parameters used in embedding look-up tables, neural networks are easy to overfit when trained on tasks with a large number of input embeddings or large output space. 
%In the following, we will introduce Stochastic Shared Embeddings (SSE), a simple and novel regularization method specifically developed for large embedding tables to improve the generalization performance. The core idea of SSE is to replace each embedding {\it stochastically} 
% {\color{red}(What do you mean by temporarily?} {\color{blue} i guess i mean embeddings are not replaced permanently)} 
%at each iteration with another closely related embedding, forcing common information to be shared stochastically across embeddings. 
%On contrast, our method is stochastic and shares many similarities with dropout \cite{srivastava2014dropout}. The main difference is that we do not omit some hidden units in embeddings but rather we treat one embedding as a whole in the stochastic sharing process. We find stochasticity is key to the success of both methods and using both simultaneously leads to better performances. }{\color{red}(Cho: maybe we can remove this sentence or move to other place? It seems wierd to say how our algorithm is different from others before introducing it. )}
%We name our new method Stochastic Shared Embeddings (SSE), whose core idea is to replace each embedding {\it temporarily and stochastically} with another closely related embedding to force common information to be shared stochastically across embeddings. 

Instead of optimizing objective function $R_n(\Theta)$ in \eqref{eq:obj}, SSE-Graph described in Algorithm~\ref{alg:see-graph}, Figure~\ref{fig:train_test}, and Figure~\ref{fig:sse-graph} is approximately optimizing the objective function below:
\begin{equation}\label{eq:single_obj}
    S_n(\Theta) = \sum_i \sum_{\bo k \in \cl I} p(\bo j^i, \bo k|\Phi) \ell(\bo E[\bo k] | \Theta),
\end{equation}
where $p(\bo j, \bo k | \Phi)$ is the transition probability (with parameters $\Phi$) of exchanging the encoding vector $\bo j \in \mathcal I$ with a new encoding vector $\bo k \in \mathcal I$ in the Cartesian product index set of all embedding tables.
When there is a single embedding table ($M =1$) then there are no hard restrictions on the transition probabilities, $p(.,.)$, but when there are multiple tables ($M > 1$) then we will enforce that $p(.,.)$ takes a tensor product form (see \eqref{eq:multiple_obj}).
When we are assuming that there is only a single embedding table ($M=1$) we will not bold $j, E[j]$ and suppress their indices.
%The form above implies that there is a single embedding table and a single embedding per data point $(x_i, y_i)$, but as we will see this can accommodate multiple embedding tables and multiple embeddings per $(x_i, y_i)$.

In the single embedding table case, $M=1$, there are many ways to define transition probability from $j$ to $k$. One simple and effective way is to use a random walk (with random restart and self-loops) on a knowledge graph $\mathcal{G}$, i.e.~when embedding $j$ is connected with $k$ but not with $l$, we can set the ratio of  $p(j, k|\Phi)$ and  $p(j, l|\Phi)$ to be a constant greater than $1$. In more formal notation, we have  \begin{equation}\label{eq:transition1}
   j \sim k, j \not\sim l \longrightarrow  p(j, k|\Phi)/p(j, l|\Phi) = \rho, 
\end{equation} where $\rho > 1$ and is a tuning parameter. It is motivated by the fact that embeddings connected with each other in knowledge graphs should bear more resemblance and thus be more likely replaced by each other. 
Also, we let 
%\begin{equation}\label{eq:transition2}
    $p(j, j|\Phi) = 1 - p_0,$
%\end{equation} 
where $p_0$ is called the {\it SSE probability} and embedding retainment probability is $1-p_0$. We treat both $p_0$ and $\rho$ as tuning hyper-parameters in experiments. With \eqref{eq:transition1} and $\sum_k p(j, k|\Phi) = 1$, we can derive transition probabilities between any two embeddings to fill out the transition probability table.

When there are multiple embedding tables, $M > 1$, then we will force that the transition from $\bo j$ to $\bo k$ can be thought of as independent transitions from $j_l$ to $k_l$ within embedding table $l$ (and index set $\cl I_l$).
Each table may have its own knowledge graph, resulting in its own transition probabilities $p_l(.,.)$.
The more general form of the SSE-graph objective is given below:

\begin{equation}\label{eq:multiple_obj}
    S_n(\Theta) =  \sum_i \sum_{k_1, \dots, k_M} p_1(j^i_1, k_1 |\Phi)\cdots p_M(j^i_M, k_M |\Phi) \ell(E_1[k_1], \dots, E_M[k_M] | \Theta),
\end{equation}
Intuitively, this SSE objective could reduce the variance of the estimator.
%and can serve as a regularizer to prevent over-fitting. 

Optimizing \eqref{eq:multiple_obj} with SGD or its variants (Adagrad \cite{duchi2011adaptive}, Adam \cite{kingma2014adam}) is simple. We just need to randomly switch each original embedding tensor $\bo E[\bo j^i]$ with another embedding tensor $\bo E[\bo k]$ randomly sampled according to the transition probability (see Algorithm~\ref{alg:see-graph}). This is equivalent to have a randomized embedding look-up layer as shown in Figure~\ref{fig:train_test}. 

We can also accommodate sequences of embeddings, which commonly occur in natural language application, by considering
 $(j^i_{l,1}, k_{l,1}), \dots, (j^i_{l,n^i_l}, k_{l,n^i_l})$ instead of $(j^i_l, k_l)$ for $l$-th embedding table in \eqref{eq:multiple_obj}, where $1 \leq l \leq M$ and $n^i_l$ is the number of embeddings in table $l$ that are associated with $(x_i, y_i)$.
 When there is more than one embedding look-up table, we sometimes prefer to use different $p_0$ and $\rho$ for different look-up tables in \eqref{eq:transition1} and the SSE probability constraint. For example, in recommender systems, we would use $p_u, \rho_u$ for user embedding table and $p_i, \rho_i$ for item embedding table.
% either thinking of the index sets $\mathcal I_l$ as containing sequences, or by generating sequences of embedding tables.

We find that SSE with knowledge graphs, i.e., SSE-Graph, can force similar embeddings to cluster when compared to the original neural network without SSE-Graph. In Figure~\ref{fig:pca}, one can easily see that more embeddings tend to cluster into 2 singularities after applying SSE-Graph when embeddings are projected into 3D spaces using PCA. Interestingly, a similar phenomenon occurs when assuming the knowledge graph is a complete graph, which we would introduce as SSE-SE below.

\begin{figure*}
\centering
\begin{tabular}{ccc}
\hspace{-8pt}\includegraphics[width=0.3\linewidth]{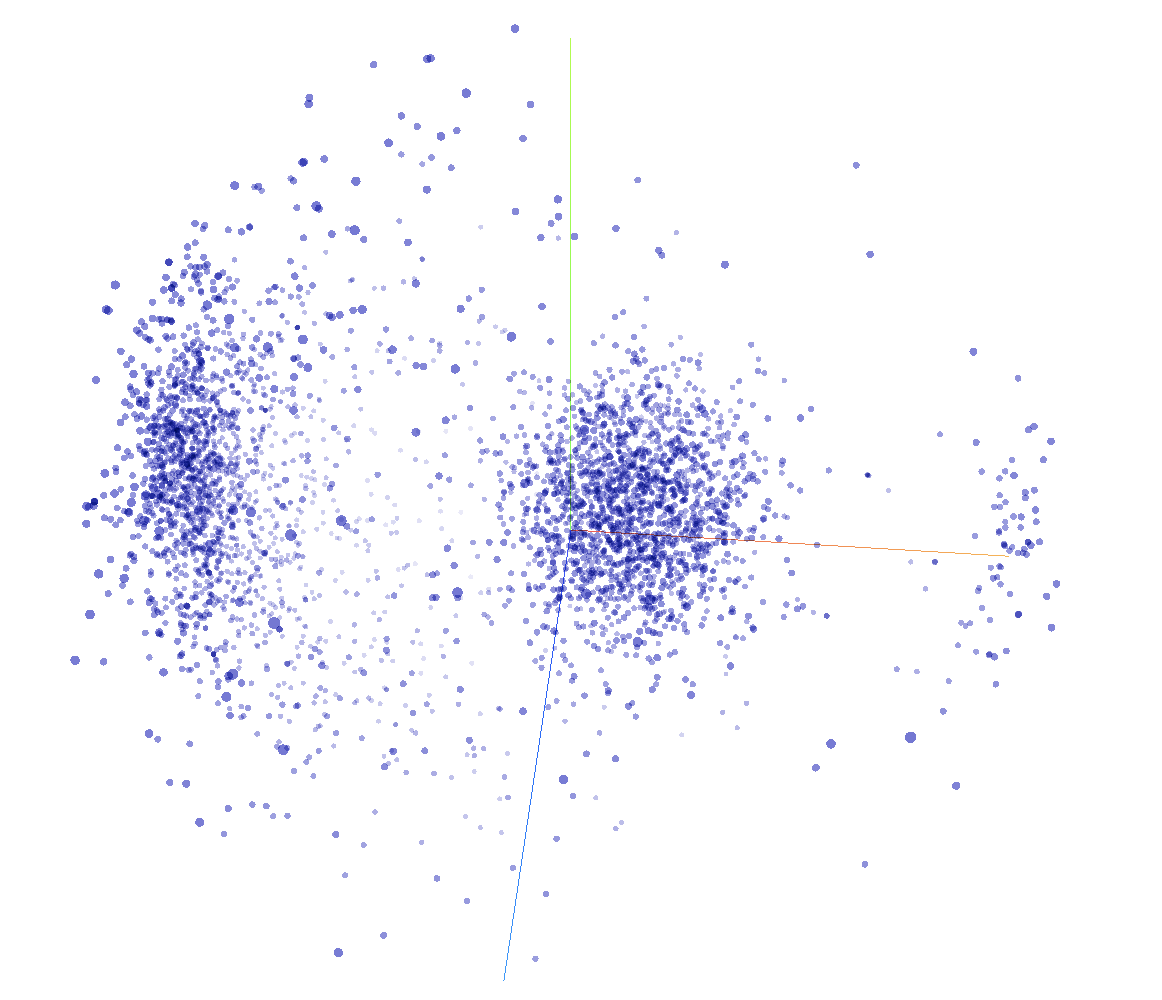} &
\hspace{-14pt}\includegraphics[width=0.3\linewidth]{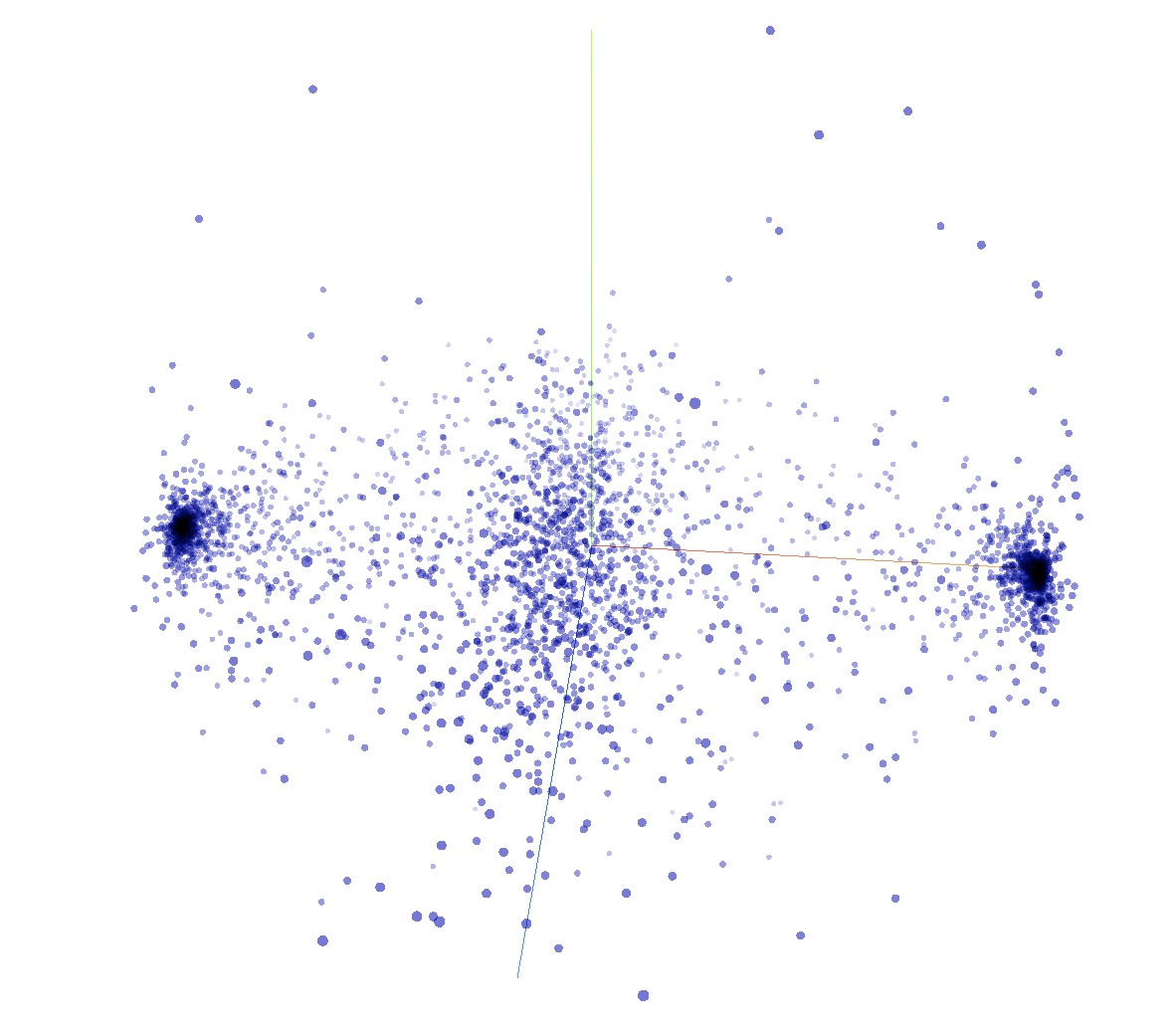} &
\hspace{-14pt}\includegraphics[width=0.3\linewidth]{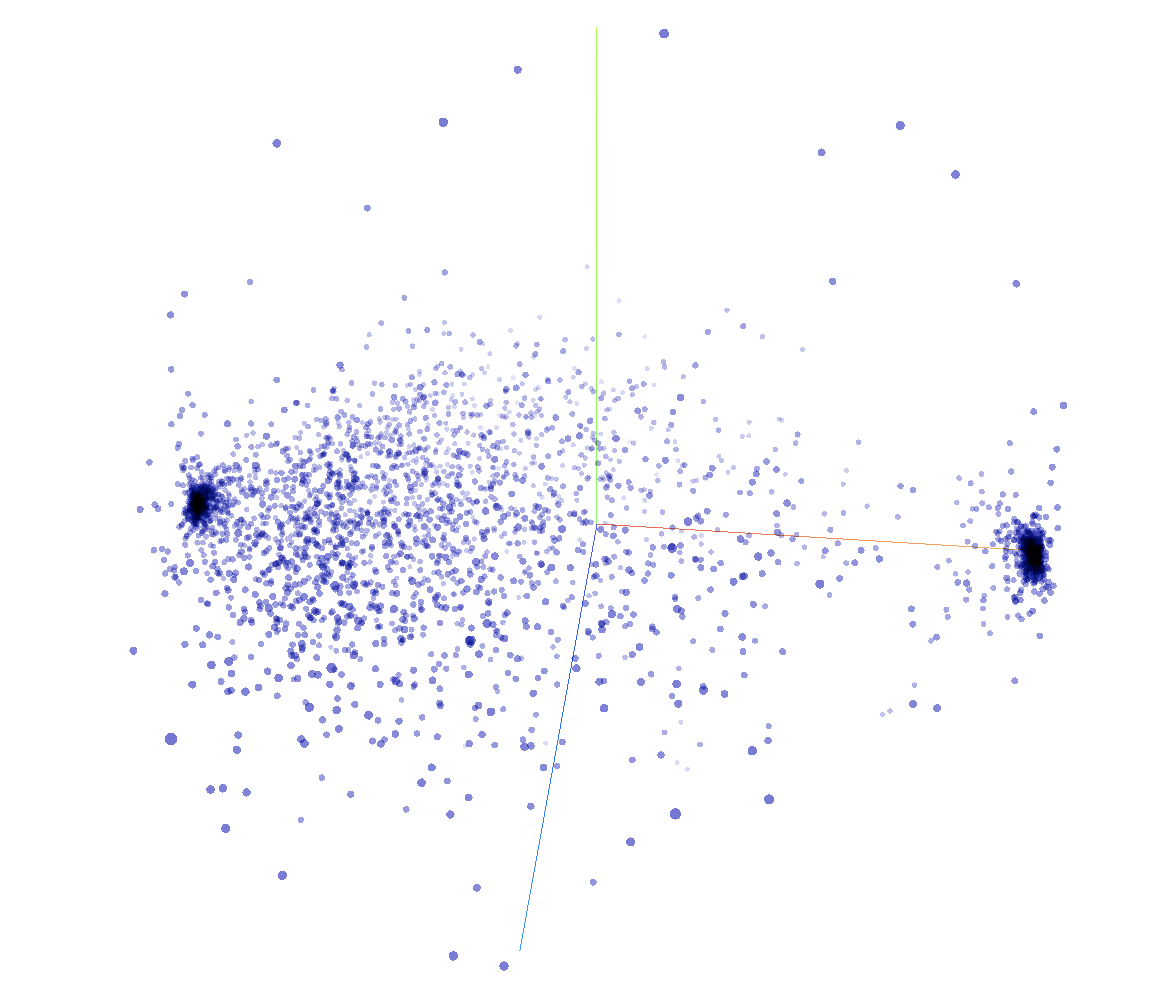}
\end{tabular}
\vspace{-10pt}\caption{Projecting 50-dimensional embeddings obtained by training a simple neural network without SSE (Left), and with SSE-Graph (Center) , SSE-SE (Right) into 3D space using PCA.}
\label{fig:pca}
\vspace{-10pt}\end{figure*}

\subsection{Simplified SSE with Complete Graph: SSE-SE}
One clear limitation of applying the SSE-Graph is that not every dataset comes with good-quality knowledge graphs on embeddings.
For those cases, we could assume there is a complete graph over all embeddings so there is a small transition probability between every pair of different embeddings:
%Moreover, storing a large knowledge graph and sampling replacement candidates can be a bottleneck in terms of both memory and speed in many cases. 
%Sometimes, even worse, we do not have a good knowledge on what the graphs should be for some embeddings. For example, we do not have a good understanding on what knowledge graphs for embeddings of words should be like. Due to all these mentioned limitations, 
%we come up with an alternative algorithm to replace SSE-Graph in such scenarios. We call it SSE-SE, an abbreviation for {\it Stochastic Shared Embeddings - Simple and Easy}. The core idea behind SSE-SE assumes there always exists a complete graph over all embeddings on top of SSE-Graph. SSE-SE is a special case of SSE-Graph with $\rho=1$ in Equation~\ref{eq:transition1}. In this case, Equation~\ref{eq:transition2} remains intact, but we can rewrite Equation~\ref{eq:transition1} as:
\begin{equation}\label{eq:sse-se}
    p(j, k|\Phi) = \frac{p_0}{N - 1}, \quad \forall 1 \leq k \neq j \leq N,
\end{equation} where $N$ is the size of the embedding table.
The SGD procedure in Algorithm~\ref{alg:see-graph} can still be applied and we call this algorithm SSE-SE (Stochastic Shared Embeddings - Simple and Easy).
It is worth noting that SSE-Graph and SSE-SE are applied to embeddings associated with not only input $x_i$ but also those with output $y_i$. Unless there are considerably many more embeddings than data points and model is significantly overfitting, normally $p_0 = 0.01$ gives reasonably good results.
%A very special case is that if the embeddings associated with output $y_i$ are one-hot encodings and cross entropy loss is used (or any loss linear in terms of $y_i$), 

Interestingly, we found that the SSE-SE framework is related to several techniques used in practice. 
For example, BERT pre-training unintentionally applied a method similar to SSE-SE to input $x_i$ by replacing the masked word with a random word. 
This would implicitly introduce an SSE layer for input $x_i$ in Figure~\ref{fig:train_test}, because now embeddings associated with input $x_i$ be stochastically mapped according to \eqref{eq:sse-se}. 
The main difference between this and SSE-SE is that it merely augments the input once, while SSE introduces randomization at every iteration, and we can also accommodate label embeddings.
%, and it is only for embeddings associated with input $x_i$ but not for output $y_i$, so it suffers from an avoidable performance degradation. 
In experimental Section~\ref{sec: bert}, we will show that SSE-SE would improve original BERT pre-training procedure as well as fine-tuning procedure. 

\subsection{Theoretical Guarantees}
% MAKE NOTATION CONSISTENT WITH MAIN BODY.

We explain why SSE can reduce the variance of estimators and thus leads to better generalization performance. 
For simplicity, we consider the SSE-graph objective \eqref{eq:single_obj} where there is no transition associated with the label $y_i$, and only the embeddings associated with the input $x_i$ undergo a transition.
When this is the case, we can think of the loss as a function of the $x_i$ embedding and the label, $\ell(\bo E[\bo j^i], y_i; \Theta)$.
We take this approach because it is more straightforward to compare our resulting theory to existing excess risk bounds.

%Thus, we may write $p(i,j) := p(i,j|\Theta)$ more succinctly, and let $i,j \in \{1,\ldots,n\}$ (in the case of SSE-graph $n$ is the number of nodes). 
%Let $n$ be the size of the 
The SSE objective in the case of only input transitions can be written as,
\begin{equation}
    S_n(\Theta) = \sum_i \sum_{\bo k} p(\bo j^i, \bo k) \cdot \ell(\bo E[\bo k],y_i|\Theta), 
\end{equation}
and there may be some constraint on $\Theta$.  
Let $\hat \Theta$ denote the minimizer of $S_n$ subject to this constraint. 
We will show in the subsequent theory that minimizing $S_n$ will get us close to a minimizer of $S(\Theta) = \mathbb E S_n(\Theta)$, and that under some conditions this will get us close to the Bayes risk.
We will use the standard definitions of empirical and true risk, $R_n(\Theta) = \sum_{i} \ell(x_i,y_i|\Theta)$ and $R(\Theta) = \mathbb E R_n(\Theta)$.

Our results depend on the following decomposition of the risk.  By optimality of $\hat \Theta$, 
\begin{equation}
    R(\hat{\Theta}) = S_n(\hat{\Theta}) + [R(\hat{\Theta}) - S(\hat{\Theta})] + [S(\hat{\Theta}) - S_n(\hat{\Theta})] \leq S_n(\Theta^\ast) + B(\hat{\Theta}) + \mathcal{E}(\hat{\Theta})
\end{equation}
where $B(\Theta) = | R(\Theta) - S(\Theta) |$,  and $E(\Theta) = | S(\Theta) - S_n(\Theta) |$. 
We can think of $B(\Theta)$ as representing the bias due to SSE, and $E(\Theta)$ as an SSE form of excess risk.
Then by another application of similar bounds,
\begin{equation}
R(\hat{\Theta}) \le R(\Theta^\ast) + B(\hat \Theta) + B(\Theta^\ast) + E(\hat \Theta) + E(\Theta^\ast).
\end{equation}
The high level idea behind the following results is that when the SSE protocol reflects the underlying distribution of the data, then the bias term $B(\Theta)$ is small, and if the SSE transitions are well mixing then the SSE excess risk $E(\Theta)$ will be of smaller order than the standard Rademacher complexity.
This will result in a small excess risk.

%     &= R(\Theta^\ast) + [S_n(\Theta^\ast) - S(\Theta^\ast)] + [S(\Theta^\ast) - R(\Theta^\ast)] + B(\hat{\Theta}) + \mathcal{E}(\hat{\Theta}) \\
%     &= R(\Theta^\ast) + B(\Theta^\ast) + B(\hat{\Theta}) + \mathcal{E}(\Theta^\ast) + \mathcal{E}(\hat{\Theta}) \\
%     &\leq R(\Theta^\ast) + 2 \sup_{\Theta}  B(\Theta) + 2 \sup_{\Theta} \mathcal{E}(\Theta)
% \end{align}, where

\begin{theorem}
\label{thm:rademacher}
Consider SSE-graph with only input transitions.
Let $L(\bo E[\bo j^i]) = \mathbb E_{Y | X = x^i} \ell(\bo E[\bo j^i],Y| \Theta)$ be the expected loss conditional on input $x^i$ and $e(\bo E[\bo j^i],y | \Theta) = \ell(\bo E[\bo j^i],y| \Theta) - L(\bo E[\bo j^i]| \Theta)$ be the residual loss.
Define the conditional and residual SSE empirical Rademacher complexities to be 
   \begin{align}
\label{eq:rade_1}
&\rho_{L,n} = \mathbb E_{\sigma} \sup_{\Theta} \left| \sum_{i} \sigma_{i} \sum_{\bo k} p(\bo j^i,\bo k) \cdot L(\bo E[\bo k] | \Theta) \right|, \\
\label{eq:rade_2}
&\rho_{e,n} = \mathbb E_{\sigma} \sup_{\Theta} \left| \sum_{i} \sigma_{i} \sum_{\bo k} p(\bo j^i,\bo k) \cdot e(\bo E[\bo k], y_i; \Theta) \right|,
% \\
% \label{eq:rade_3}
% &\rho_{2,n} = \mathbb E_{\sigma, \sigma'} \sup_\Theta \left| \sum_i \sum_{\bo k} (\sigma_i + \sigma_i') p(\bo j^i,\bo k) \ell(\bo E[\bo k],y_i ; \Theta)\right|,
\end{align}
respectively where $\sigma$ is a Rademacher $\pm 1$ random vectors in $\mathbb R^n$.
Then we can decompose the SSE empirical risk into 
\begin{equation}
\label{eq:excess_risk_bd}
\mathbb E \sup_{\Theta} |S_n(\Theta) - S(\Theta)| \le 2 \mathbb E [\rho_{L,n} + \rho_{e,n}].
\end{equation}
% Furthermore, suppose that $p$ is doubly stochastic (i.e.~$\sum_i p(i,j) = \sum_j p(i,j) = 1$), as in SSE-SE, then
% \begin{equation}
% \label{eq:rademacher_bd}
% \mathbb E[\rho_{2,n}] \le 2 \mathbb E \sup_{\Theta} \left| \sum_i \sigma_i l(x_i, y_i; \Theta) \right|,
% \end{equation}
% which is (twice) the standard empirical Rademacher complexity.
\end{theorem}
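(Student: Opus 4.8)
The plan is to treat this as a textbook symmetrization bound, where the only SSE-specific ingredient is the additive split of the loss into its conditional mean $L$ and its residual $e$. First I would use the defining identity $\ell(\bo E[\bo k],y_i|\Theta) = L(\bo E[\bo k]|\Theta) + e(\bo E[\bo k],y_i|\Theta)$ (valid for every encoding $\bo k$, not just $\bo j^i$, since $e$ is defined as exactly this difference) to write $S_n(\Theta) = A_n(\Theta) + B_n(\Theta)$, where $A_n(\Theta) = \sum_i \sum_{\bo k} p(\bo j^i,\bo k)\, L(\bo E[\bo k]|\Theta)$ gathers the conditional-mean part and $B_n(\Theta) = \sum_i \sum_{\bo k} p(\bo j^i,\bo k)\, e(\bo E[\bo k],y_i|\Theta)$ gathers the residual part. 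Crucially, the $i$th summand of $A_n$ depends on the sample only through the input $x_i$ (the label randomness has already been averaged out inside $L$), whereas the $i$th summand of $B_n$ depends on the full data point $(x_i,y_i)$. Since $S(\Theta) = \mathbb E S_n(\Theta) = \mathbb E A_n(\Theta) + \mathbb E B_n(\Theta)$, a single triangle inequality gives
\[
\sup_\Theta |S_n(\Theta) - S(\Theta)| \le \sup_\Theta |A_n(\Theta) - \mathbb E A_n(\Theta)| + \sup_\Theta |B_n(\Theta) - \mathbb E B_n(\Theta)|,
\]
so after taking expectations it suffices to bound each expected uniform deviation by twice the matching Rademacher complexity.

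For each term I would run the classical ghost-sample argument. Introducing an independent copy of the data with associated statistic $A_n'$, I write $\mathbb E A_n = \mathbb E' A_n'$ and pull the inner expectation out of the supremum by Jensen to get $\mathbb E \sup_\Theta |A_n - \mathbb E A_n| \le \mathbb E \sup_\Theta |A_n - A_n'|$. Because the data are i.i.d., $A_n - A_n'$ is a sum over $i$ of independent terms, each symmetric about zero, so multiplying the $i$th term by a Rademacher sign $\sigma_i$ leaves the joint law unchanged; a second triangle inequality decouples the two halves and generates the factor of two, yielding
\[
\mathbb E \sup_\Theta |A_n - \mathbb E A_n| \le 2\, \mathbb E\, \mathbb E_\sigma \sup_\Theta \left| \sum_i \sigma_i \sum_{\bo k} p(\bo j^i,\bo k)\, L(\bo E[\bo k]|\Theta) \right| = 2\, \mathbb E\, \rho_{L,n}.
\]
The identical argument applied to $B_n$, now carrying both $x_i'$ and $y_i'$ in the ghost sample, gives $\mathbb E \sup_\Theta |B_n - \mathbb E B_n| \le 2\,\mathbb E\, \rho_{e,n}$. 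Summing the two bounds delivers exactly $\mathbb E \sup_\Theta |S_n - S| \le 2\,\mathbb E[\rho_{L,n} + \rho_{e,n}]$.

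Two bookkeeping points need care. The transition weights $p(\bo j^i,\bo k)$ must be absorbed into the per-sample summand, so that the object symmetrized for sample $i$ is the whole map $\Theta \mapsto \sum_{\bo k} p(\bo j^i,\bo k)(\,\cdot\,)$; this is legitimate because $\bo j^i$ is a deterministic function of $x_i$ and hence part of the $i$th data point, and the weights never couple distinct samples. I also want to keep straight that $\rho_{L,n}$ and $\rho_{e,n}$ are \emph{empirical} complexities with $\mathbb E_\sigma$ taken conditionally on the sample, so the outer data expectation in the symmetrization is precisely what converts them into $\mathbb E\,\rho_{L,n}$ and $\mathbb E\,\rho_{e,n}$. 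The main obstacle is really just this symmetrization justification, i.e.\ verifying that after subtracting the ghost sample the summands are independent and symmetric; this rests entirely on the i.i.d.\ assumption and on $A_n, B_n$ being genuine sums of single-sample functions. No contraction, Lipschitz, or boundedness hypothesis is needed for this inequality itself — those would enter only later, when one wants to argue that the mean-zero structure of $e$ makes $\rho_{e,n}$ of smaller order than the standard Rademacher complexity, which is beyond the scope of this statement.
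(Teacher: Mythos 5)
Your proposal is correct and matches the paper's proof in all essentials: the split $S_n = A_n + B_n$ via $\ell = L + e$ is exactly the paper's decomposition $S_n - S = \bigl(S_n - \mathbb E_{Y|X}[S_n]\bigr) + \bigl(\mathbb E_{Y|X}[S_n] - S\bigr)$, and both pieces are then handled by the same ghost-sample symmetrization with Jensen's inequality, a triangle inequality, and the resulting factor of $2$. The only (immaterial) difference is that for the residual term the paper symmetrizes conditionally on the inputs, using a ghost sample of labels $y_i'$ drawn from $y_i \mid x_i$ with the same $x_i$, whereas you use a full ghost pair $(x_i', y_i')$; both yield symmetric independent summands and the identical bound $2\,\mathbb E\,\rho_{e,n}$.
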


\begin{remark}
The transition probabilities in \eqref{eq:rade_1}, \eqref{eq:rade_2} act to smooth the empirical Rademacher complexity.  To see this, notice that we can write the inner term of \eqref{eq:rade_1} as $(P \sigma)^\top L$, where we have vectorized $\sigma_i, L(x_i;\Theta)$ and formed the transition matrix $P$.  Transition matrices are contractive and will induce dependencies between the Rademacher random variables, thereby stochastically reducing the supremum.  
In the case of no label noise, namely that $Y | X$ is a point mass, $e(x,y;\Theta) = 0$, and $\rho_{e,n} = 0$.  
The use of $L$ as opposed to the losses, $\ell$, will also make $\rho_{L,n}$ of smaller order than the standard empirical Rademacher complexity.
We demonstrate this with a partial simulation of $\rho_{L,n}$ on the Movielens1m dataset in Figure~\ref{fig:sim} of the Appendix.
%In this case, the factor of $2$ in the \eqref{eq:rademacher_bd} can be dropped, and the notion of complexity is a strict improvement.
\end{remark}

\begin{theorem}
\label{thm:main}
Let the SSE-bias be defined as
\[
\mathcal B = \sup_{\Theta} \left| \mathbb E \left[ \sum_i \sum_{\bo k} p(\bo j^i, \bo k) \cdot \left( \ell(\bo E[\bo k],y_i|\Theta) - \ell(\bo E[\bo j^i],y_i|\Theta) \right) \right] \right|.
\]
Suppose that $0 \le \ell(.,.;\Theta) \le b$ for some $b > 0$,
then 
\[
\mathbb P \left\{ R(\hat \Theta) > R(\Theta^*) + 2 \mathcal B + 4 \mathbb E [\rho_{L,n} + \rho_{e,n}] + \sqrt n u \right\} \le e^{-\frac{u^2}{2 b^2}}.
\]
\end{theorem}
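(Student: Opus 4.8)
The plan is to start from the risk decomposition already assembled above, namely
\[
R(\hat\Theta) \le R(\Theta^*) + B(\hat\Theta) + B(\Theta^*) + E(\hat\Theta) + E(\Theta^*),
\]
and to bound the two bias terms by the deterministic constant $\mathcal B$ and the two excess-risk terms by a single uniform deviation $Z = \sup_{\Theta} |S_n(\Theta) - S(\Theta)|$. Since both $\hat\Theta$ and $\Theta^*$ lie in the feasible set, each of $B(\hat\Theta), B(\Theta^*)$ is at most $\sup_{\Theta} B(\Theta)$ and each of $E(\hat\Theta), E(\Theta^*)$ is at most $Z$; these inequalities hold for every realization of the data, so no measurability issue arises from $\hat\Theta$ being data-dependent. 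This reduces the theorem to two tasks: identifying $\sup_{\Theta} B(\Theta)$ with $\mathcal B$, and producing a high-probability bound on $Z$.

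For the bias, first I would use that the transition probabilities satisfy $\sum_{\bo k} p(\bo j^i, \bo k) = 1$, so that $\ell(\bo E[\bo j^i], y_i | \Theta) = \sum_{\bo k} p(\bo j^i, \bo k)\, \ell(\bo E[\bo j^i], y_i | \Theta)$ and hence
\[
S(\Theta) - R(\Theta) = \mathbb E\left[\sum_i \sum_{\bo k} p(\bo j^i, \bo k)\bigl(\ell(\bo E[\bo k], y_i | \Theta) - \ell(\bo E[\bo j^i], y_i | \Theta)\bigr)\right].
\]
Taking absolute values and the supremum over $\Theta$ shows $\sup_{\Theta} B(\Theta)$ equals exactly $\mathcal B$, giving $B(\hat\Theta) + B(\Theta^*) \le 2\mathcal B$.

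For the excess-risk term, I would bound $\mathbb E Z \le 2\mathbb E[\rho_{L,n} + \rho_{e,n}]$ using Theorem~\ref{thm:rademacher}, and then concentrate $Z$ around its mean. The key observation is that $Z$ is a function of the i.i.d. samples with bounded differences: replacing one sample alters only the corresponding summand $\sum_{\bo k} p(\bo j^i, \bo k)\,\ell(\bo E[\bo k], y_i | \Theta)$ of $S_n(\Theta)$, and since this summand is a convex combination of values in $[0,b]$ it again lies in $[0,b]$, so the change in $S_n(\Theta)$, and hence in $Z$, is at most $b$ uniformly in $\Theta$. McDiarmid's bounded-differences inequality with $c_i = b$ then yields $\mathbb P\{Z - \mathbb E Z > t\} \le \exp(-2t^2/(nb^2))$. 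Combining the pieces gives $R(\hat\Theta) \le R(\Theta^*) + 2\mathcal B + 2Z \le R(\Theta^*) + 2\mathcal B + 4\mathbb E[\rho_{L,n} + \rho_{e,n}] + 2(Z - \mathbb E Z)$, and the substitution $t = \sqrt n\, u / 2$ converts the McDiarmid tail into exactly $e^{-u^2/(2b^2)}$, which is the claimed bound.

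The hard part is not conceptual — given Theorem~\ref{thm:rademacher} the argument is largely bookkeeping — but rather making the constants line up precisely. One must verify that the bounded-difference constant is $b$ rather than $2b$, which relies on the summand being a genuine convex combination rather than an arbitrary element of $[-b,b]$, and then propagate the factor of $2$ coming from the two excess-risk terms through the $t = \sqrt n\, u/2$ substitution so the exponent lands on $u^2/(2b^2)$. A secondary point to state carefully is the implicit assumption that the samples $(x_i, y_i)$ are independent, which is what licenses the use of McDiarmid's inequality.
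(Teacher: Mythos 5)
Your proof follows essentially the same route as the paper's: the decomposition $R(\hat\Theta) \le R(\Theta^*) + B(\hat\Theta) + B(\Theta^*) + E(\hat\Theta) + E(\Theta^*)$, the identification $\sup_\Theta B(\Theta) = \mathcal B$, Theorem~\ref{thm:rademacher} to control $\mathbb E\sup_\Theta|S_n(\Theta)-S(\Theta)|$, and McDiarmid's bounded-differences inequality for the tail. The one substantive difference is the bounded-difference constant: the paper bounds the change in $S_n(\Theta)$ under replacement of a single sample by $2b$ (splitting into a change in $y_i$ and a change in the transition row $p(\bo j^i,\cdot)$ and bounding each part by $b$), whereas you observe that each summand $\sum_{\bo k} p(\bo j^i,\bo k)\,\ell(\bo E[\bo k],y_i|\Theta)$ is a convex combination of values in $[0,b]$, so swapping one sample moves $S_n(\Theta)$ by at most $b$ uniformly in $\Theta$. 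Your sharper constant is actually what the stated theorem requires: with $c_i = 2b$ and $t = \sqrt n\,u/2$, McDiarmid yields $e^{-u^2/(8b^2)}$ rather than the claimed $e^{-u^2/(2b^2)}$, so your convex-combination bookkeeping closes a constant-factor slack that the paper's own proof, as written, leaves open.
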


% \begin{figure*}
% \begin{tabular}{ccc}
% \hspace{-8pt}\includegraphics[width=0.34\linewidth]{nips/mf_pca.png} &
% \hspace{-14pt}\includegraphics[width=0.34\linewidth]{nips/sse_graph_pca.png} &
% \hspace{-14pt}\includegraphics[width=0.34\linewidth]{nips/sse_pca.png}
% \end{tabular}
% \vspace{-10pt}\caption{.TODO.}
% \label{fig:sse-speed}
% \vspace{-10pt}\end{figure*}

% \begin{table}
%   \caption{Compare SSE-Graph and SSE-SE (users with at least 1 + 10 ratings)}
%   \label{tb:sse-se_sse-graph1}
%   \centering
%  %\resizebox{\columnwidth}{!}{
%   \begin{tabular}{cccccccccc}
%     \toprule
%     &\multicolumn{4}{c}{Movielens1m} & \multicolumn{4}{c}{Movielens10m}  \\
%     \cmidrule(r){2-5}   \cmidrule(r){6-9}
%     Model    & RMSE & $\rho$ & $p_{u}$ & $p_{i}$ & RMSE &  $\rho$ & $p_{u}$ & $p_{i}$\\
%     \midrule
%     ALS-MF      & 0.9319 &  -    & - &  - &1.8315 &  - & -   &  -\\

% Graph Laplacian + ALS-MF   & 0.9023  &  - &  -    & -  & 1.8251 &  - & -   &  -\\
%     \midrule
%     SGD-MF            & 0.8979 &     - & - & - & 1.7542 & -   & -   &  -\\
%     SSE-Graph + SGD-MF & \textbf{0.8957}   &  1000  & 0 & 0.005 & \textbf{1.7494}&  100  & 0   &  0.01 \\
%     SSE-SE + SGD-MF    & 0.8960& - & 0 & 0.005 & 1.7499& - & 0   &  0.01\\
%     \bottomrule
%   \end{tabular}
% %}
% \end{table}

\begin{table}
 \caption{Compare SSE-Graph and SSE-SE against ALS-MF with Graph Laplacian Regularization. The $p_u$ and $p_i$ are the SSE probabilities for user and item embedding tables respectively, as in \eqref{eq:sse-se}. Definitions of $\rho_u$ and $\rho_i$ can be found in \eqref{eq:transition1}. Movielens10m does not have user graphs. %{\color{red}(Cho: how about organizing this table into two parts 1) Algorithms using graph (Grahp laplacian, SSE-Graph) 2) Algorithms without graph (MF, MF+dropout(do we want to comapre with dropout? You are comparing it in BPR so maybe we should add it to all the comparisons? ), MF+SSE}}
% {\color{red}(Cho: need to say what is $\rho_u, \rho_i$. )
}
  \label{tb:sse-se_sse-graph2}
  \centering
 \resizebox{0.8\columnwidth}{!}{
  \begin{tabular}{cccccccccccc}
    \toprule
    &\multicolumn{5}{c}{Movielens1m} & \multicolumn{5}{c}{Movielens10m}  \\
    \cmidrule(r){2-6}   \cmidrule(r){7-11}
    Model    & RMSE & $ \rho_{u}$& $\rho_{i}$ & $p_{u}$ & $p_{i}$ & RMSE & $ \rho_{u}$& $\rho_{i}$ & $p_{u}$ & $p_{i}$\\
    \midrule
    % ALS-MF      & 1.1565& - &  -    & - &  - & 2.1185&- &  - & -   &  -\\
    SGD-MF            & 1.0984 & - &     - & - & - & 1.9490 & - & -   & -   &  -\\
  
        % \midrule
%GCN (2 hidden layers)& 1.0346 & - &  - &  -    & -  &  1.8632&- &  - & -   &  -\\
    Graph Laplacian + ALS-MF   & 1.0464 & - &  - &  -    & -  & 1.9755&- &  - & -   &  -\\
    
%  SSE-Graph + SGD-MF & \textbf{1.0172}   &  1000  & 0 & 0.005 & \textbf{1.9019}& 500  & 0.01   &  0.01 \\
SSE-Graph + SGD-MF & \textbf{1.0145}& 500   &  200  & 0.005 & 0.005 & \textbf{1.9019}& 1 & 500  & 0.01   &  0.01 \\
  SSE-SE + SGD-MF    & 1.0150 & 1 & 1 & 0.005 & 0.005 & 1.9085& 1 & 1& 0.01   &  0.01\\
    \bottomrule
  \end{tabular}
}
 \vspace{-7pt}
\end{table}

%\subsubsection{Evaluate SSE-SE Performances}

\begin{table}
  \caption{SSE-SE outperforms Dropout for Neural Networks with One Hidden Layer such as Matrix Factorization Algorithm regardless of dimensionality we use. $p_s$ is the SSE probability for both user and item embedding tables and $p_d$ is the dropout probability.}
  \label{tb:mf}
  \centering
 \resizebox{0.7\columnwidth}{!}{
  \begin{tabular}{cccccccccc}
    \toprule
    & \multicolumn{3}{c}{Douban} & \multicolumn{3}{c}{Movielens10m}  & \multicolumn{3}{c}{Netflix}          \\
    \cmidrule(r){2-4} \cmidrule(r){5-7}  \cmidrule(r){8-10} 
    Model     & RMSE & $p_{d}$ & $p_{s}$ & RMSE & $p_{d}$ & $p_{s}$ & RMSE & $p_{d}$ & $p_{s}$\\
    \midrule
    MF   & 0.7339 & - & - & 0.8851  & -  & - & 0.8941 & -  & - \\
    Dropout + MF   & 0.7296 & 0.1 & - & 0.8813  & 0.1  & - &  0.8897 & 0.1  & -   \\
    SSE-SE + MF   & 0.7201 & - & 0.008  & 0.8715 & -   & 0.008  &  0.8842 &  -   & 0.008 \\
    SSE-SE + Dropout + MF   & \textbf{0.7185} & 0.1 & 0.005  & \textbf{0.8678} & 0.1  & 0.005   & \textbf{0.8790} & 0.1  & 0.005 \\
    \bottomrule
  \end{tabular}
}
 \vspace{-4pt}
\end{table}

\begin{table}
  \caption{SSE-SE outperforms dropout for Neural Networks with One Hidden Layer such as Bayesian Personalized Ranking Algorithm regardless of dimensionality we use. We report the metric precision for top $k$ recommendations as $P@k$.}
  \label{tb:bpr}
  \centering
 \resizebox{0.75\columnwidth}{!}{
  \begin{tabular}{cccccccccc}
    \toprule
    & \multicolumn{3}{c}{Movielens1m}   & \multicolumn{3}{c}{Yahoo Music}   & \multicolumn{3}{c}{Foursquare}        \\
    \cmidrule(r){2-4} \cmidrule(r){5-7}  \cmidrule(r){8-10}  
    Model     & $P@1$ & $P@5$ & $P@10$ & $P@1$ & $P@5$ & $P@10$ & $P@1$ & $P@5$ & $P@10$ \\
    \midrule
    % Weighted-MF & 0.5469 & 0.4942 &  0.4612 & 0.3908 & 0.3102 & 0.2701 & 0.0218 & 0.0155 & 0.0141  \\
    SQL-Rank (2018) & \textbf{0.7369} & 0.6717 & 0.6183  &  \textbf{0.4551}   &  \textbf{0.3614}   & \textbf{0.3069}  & 0.0583  & 0.0194   & \textbf{0.0170}  \\
    \midrule
    BPR   & 0.6977 & 0.6568 &  0.6257   & 0.3971 & 0.3295 &   0.2806 & 0.0437 & 0.0189 &   0.0143  \\
    Dropout + BPR   & 0.7031 & 0.6548 &  0.6273  & 0.4080 & 0.3315 &  0.2847 & 0.0437 & 0.0184 &  0.0146  \\
    SSE-SE + BPR   & 0.7254 & \textbf{0.6813} &  \textbf{0.6469}   & 0.4297 & 0.3498 & 0.3005 & \textbf{0.0609} & \textbf{0.0262} & 0.0155  \\
    \bottomrule
  \end{tabular}
}
\vspace{-10pt}
\end{table}

\begin{remark}
The price for `smoothing' the Rademacher complexity in Theorem \ref{thm:rademacher} is that SSE may introduce a bias.  This will be particularly prominent when the SSE transitions have little to do with the underlying distribution of $Y,X$.  On the other extreme, suppose that $p(\bo j, \bo k)$ is non-zero over a neighborhood $\mathcal N_{\bo j}$ of $\bo j$, and that for data $x',y'$ with encoding $\bo k \in \mathcal N_{\bo j}$, $x', y'$ is identically distributed with $x_i,y_i$, then $\mathcal B = 0$.  In all likelihood, the SSE transition probabilities will not be supported over neighborhoods of iid random pairs, but with a well chosen SSE protocol the neighborhoods contain approximately iid pairs and $\mathcal B$ is small.
\end{remark}

\section{Experiments}\label{sec:experiments}
% A central question here is what methods we want to compare against. Listing a bunch of methods does not help support our arguments. 
% One possibility is that we can limit our comparisons to parameter sharing and state we can incorporate weight decay and dropout better than other embedding regularization methods. 
We have conducted extensive experiments on 6 tasks, including 3 recommendation tasks (explicit feedback, implicit feedback and sequential recommendation) and 3 NLP tasks (neural machine translation, BERT  pre-training, and BERT fine-tuning for sentiment classification) and found that our proposed SSE can effectively improve generalization performances on a wide variety of tasks. 
Note that the details about datasets and parameter settings can be found in the appendix.  %detailed experimental settings can be found in the 

% \vspace{-.3cm}
\subsection{Neural Networks with One Hidden Layer (Matrix Factorization and BPR)}\label{sec:simple_nn}
% Results on MF and BPR. First, we show the results of SSE with graphs before showing the results of simplified SSE algorithm. For the graph comparisons, we may want to use Graph regularized alternating least squares and graph convolutional matrix completion as baselines. We analyze what impact SSE has on embedding distributions. Use tool \url{https://projector.tensorflow.org/} to visualize embeddings via PCA or t-SNE. Add plots here or in intro.

% Omit BPR if running out of space. Just state that the loss we used here does not matter. Even with a pairwise ranking loss, we observe similar results. Also, it does not matter how many different embeddings we draw from each embedding look-up table for each SGD update. 

%We start by examining some simple neural networks with one hidden layer only. 
Matrix Factorization Algorithm (MF) \cite{mnih2008probabilistic} and Bayesian Personalized Ranking Algorithm (BPR) \cite{rendle2009bpr} can be viewed as neural networks with one hidden layer (latent features) and are quite popular in recommendation tasks.
%, motivating many deeper recommendation neural networks thereafter. 
MF uses the squared loss designed for explicit feedback data while BPR uses the pairwise ranking loss designed for implicit feedback data. 
%The main difference between the two is the loss function they utilize: MF uses the squared loss while BPR uses the pairwise ranking loss. One is suited for explicit feedback data such as user ratings and the other is suited for implicit feedback data such as user clicks.

First, we conduct experiments on two explicit feedback datasets: Movielens1m and Movielens10m.
For these datasets, we can construct graphs based on actors/actresses starring the movies. % (see more details in Appendix \ref{app:mf}). 
We compare SSE-graph and the popular Graph Laplacian Regularization (GLR) method~\cite{rao2015collaborative} in Table~\ref{tb:sse-se_sse-graph2}. The results  show that SSE-graph consistently outperforms GLR. 
This indicates that our SSE-Graph has greater potentials over graph Laplacian regularization as we do not explicitly penalize the distances across embeddings, but rather we implicitly penalize the effects of similar embeddings on the loss.
Furthermore, we show that even without existing knowledge graphs of embeddings, our SSE-SE  performs only slightly worse than SSE-Graph but still much better than GLR and MF. 

In general, SSE-SE is a good alternative when graph information is not available.
We then show that our proposed SSE-SE can be used together with standard regularization techniques such as dropout and weight decay to improve recommendation results regardless of the loss functions and dimensionality of embeddings.  This is evident in Table~\ref{tb:mf} and Table~\ref{tb:bpr}. With the help of SSE-SE, BPR can perform better than the state-of-art listwise approach SQL-Rank \cite{wu2018sql} in most cases. We include the optimal SSE parameters in the table for references and leave out other experiment details to the appendix. In the rest of the paper, we would mostly focus on SSE-SE as we do not have high-quality graphs of embeddings on most datasets.

\begin{table}
  \caption{SSE-SE has two tuning parameters: probability $p_x$ to replace embeddings associated with input $x_i$ and probability $p_y$ to replace embeddings associated with output $y_i$. We use the dropout probability of $0.1$, weight decay of $1e^{-5}$, and learning rate of $1e^{-3}$ for all experiments.}
  \label{tb:sasrec}
  \centering
 \resizebox{0.75\columnwidth}{!}{
  \begin{tabular}{ccccccc}
    \toprule
    & \multicolumn{2}{c}{Movielens1m } & Dimension& $\#$ of Blocks  & \multicolumn{2}{c}{SSE-SE Parameters}             \\
    \cmidrule(r){2-3} \cmidrule(r){4-5}  \cmidrule(r){6-7}
    Model     & NDCG$@10$ & Hit Ratio$@10$ & $d$ & $b$ & $p_x$  & $p_y$ \\
    \midrule
   % SASRec            & 0.5910 & 0.8209 & 50 & 2  & -  & - \\
    SASRec   & 0.5941 & 0.8182 & 100 & 2  & -  & -\\
    % SASRec   & 0.5857 & 0.8088 & 200 & 2  & -  & -\\
    SASRec   & 0.5996 & 0.8272 & 100 & 6  & -  & -\\
    \midrule
   % SSE-SE + SASRec   & 0.6058 & 0.8305 & 50  & 2 & 0  & 0.1  \\
    SSE-SE + SASRec   & 0.6092 & 0.8250 & 100  & 2 & 0.1  & 0  \\
    SSE-SE + SASRec   & 0.6085 & 0.8293 & 100  & 2 & 0  & 0.1  \\
    SSE-SE + SASRec   & 0.6200 & 0.8315 & 100  & 2 & 0.1  & 0.1  \\
    \midrule
    SSE-SE + SASRec   & \bfseries{0.6265} & \bfseries{0.8364} & 100  & 6 & 0.1  & 0.1  \\
    %  \midrule
    % SSE-SE + SASRec   & 0.6367 & 0.8474 & 100  & 6 & 0 (item)  & 0.1  \\
    % SSE-SE + PT       & 0.6392 & 0.8535 & 50 + 50  & 6 & 0.92 (user) + 0 (item)  & 0.1 \\
    % SSE-SE + NLP-SASRec   & 0.6383 & 0.8490 & 100  & 6 & 0 (item) & 0.1  \\
    % SSE-SE + NLP-PT          & 0.6412 & 0.8538 & 50 + 25 + 25  & 6 & 0.92 (user) + 0 (item) & 0.1 \\
    \bottomrule
  \end{tabular}
}
\vspace{-4pt}
\end{table}

\subsection{Transformer Encoder Model for Sequential Recommendation}\label{sec:sasrec}
% One thing we definitely should add is that SSE makes the model's generalization power not sensitive to the number of hidden units used for embeddings. SSE enables us to use a larger number of hidden units for embedding representations so that we will benefit more from using a deeper neural networks. Show the results when we extend from 2 block, 7 layers neural nets to 6 blocks, 19 layers neural nets. Remember to state that we use layer normalization instead of batch normalization as the former gives favorable results. 

% Cross Entropy loss is used here and again the loss does not change anything. We also use a different optimizer Adam and draw many more embeddings from the items' embedding look-up table. None of the differences matter and our algorithm is the only one working well when we try to increase the number of hidden units for embeddings. All the existing regularization techniques failed. Add a line plot here.

SASRec \cite{kang2018self} is the state-of-the-arts algorithm for sequential recommendation task. It applies the transformer model \cite{vaswani2017attention}, where a sequence of items purchased by a user can be viewed as a sentence in transformer, and next item prediction is equivalent to next word prediction in the language model.
%We use dropout probability of 0.1, weight decay of $1e^{-5}$ and learning rate of $1e^{-3}$ for all experiments. The embeddings of items are associated with both input $x_i$ and output $y_i$ here as the inner product of item embeddings are calculated. We use SSE probability of 0.1 for SSE-SE. 
In Table~\ref{tb:sasrec}, we perform SSE-SE on input embeddings ($p_x=0.1$, $p_y=0$), output embeddings ($p_x=0.1$, $p_y=0$) and both embeddings ($p_x=p_y=0.1$), and observe that all of them significantly improve over state-of-the-art SASRec ($p_x=p_y=0$). 
%We find that in Table~\ref{tb:sasrec} that using SSE-SE for embeddings associated with input and/or output only will improve ranking results: NDCG@10 from 0.5941 to 0.6092 and 0.6085 respectively and Hit Ratio@10 from 0.8182 to 0.8250 and 0.8293 respectively. But when we use SSE-SE for embeddings associated with both input and output, we will see the improvement is larger: 0.6212 for NDCG@10 and 0.8324 for Hit Ratio@10. 
The regularization effects of SSE-SE is even more obvious when we increase the number of self-attention blocks from 2 to 6, as this will lead to a more sophisticated model with many more parameters. This leads to the model overfitting terribly even with dropout and weight decay. We can see in Table~\ref{tb:sasrec} that when both methods use dropout and weight decay, SSE-SE + SASRec is doing much better than SASRec without SSE-SE.
%: NDCG@10 is 0.6265 versus 0.5996 and Hit Ratio@10 is 0.8364 versus 0.8272.  

\begin{table}
  \caption{Our proposed SSE-SE helps the Transformer achieve better BLEU scores on English-to-German in 10 out of 11 newstest data between 2008 and 2018.}
  \label{tb:nmt}
  \centering
  \resizebox{0.8\columnwidth}{!}{
  \begin{tabular}{cccccccccccc}
    \toprule
    & \multicolumn{11}{c}{Test BLEU}                   \\
    \cmidrule(r){2-12} 
    Model     & 2008 & 2009 & 2010 & 2011 & 2012   & 2013  & 2014 & 2015 & 2016 & 2017 & 2018 \\
    \midrule
    Transformer            & 21.0 &      20.7  & 22.7 
                        & 20.6 & 20.6 & \bfseries{25.3}   & 26.2 & 28.4 & 32.1 & 27.2 &  38.8  \\
    SSE-SE + Transformer    & \bfseries{21.4} & \bfseries{21.1} & \bfseries{23.0} 
                        & \bfseries{21.0} & \bfseries{20.8}  & 25.2  & \bfseries{27.2}  & \bfseries{29.2} & \bfseries{33.1} & \bfseries{27.9} &  \bfseries{39.9} \\
    \bottomrule
  \end{tabular}
  }
  \vspace{-12pt}
\end{table}

\vspace{-.3cm}
\subsection{Neural Machine Translation}\label{sec:nmt}
% {\color{red}(Cho: maybe switch this with BERT result, since there are more datasets in here.)}
We use the transformer model \cite{vaswani2017attention} as the backbone for our experiments. 
The baseline model is the standard 6-layer transformer architecture and we apply SSE-SE to both encoder, and decoder by replacing corresponding vocabularies' embeddings in the source and target sentences. We trained on the standard WMT 2014 English to German dataset which consists of roughly 4.5 million parallel sentence pairs and tested on WMT 2008 to 2018 news-test sets. 
We use the OpenNMT implementation in our experiments. We use the same dropout rate of 0.1 and label smoothing value of 0.1 for the baseline model and our SSE-enhanced model. 
% Both models have dimensionality of embeddings as $d = 512$. When decoding, we use beam search with beam size of 4 and length penalty of 0.6 and replace unknown words using attention. For both models, we average last 5 checkpoints (we save checkpoints every 10,000 step) and evaluate the model's performances on the test datasets using BLEU scores \cite{post-2018-call}. 
The only difference between the two models is whether or not we use our proposed SSE-SE with $p_0 = 0.01$ in \eqref{eq:sse-se} for both encoder and decoder embedding layers. We evaluate both models' performances on the test datasets using BLEU scores \cite{post-2018-call}.

%The control group is the standard transformer encoder-decoder architecture with self-attention. In experiment group, we apply SSE-SE towards both encoder and decoder by replacing corresponding vocabularies' embeddings in the source and target sentences. We trained on the standard WMT 2014 English to German dataset which consists of roughly 4.5 million parallel sentence pairs and tested on WMT 2008 to 2018 news-test sets. Sentences were encoded into 32,000 tokens using byte-pair encoding. We use the SentencePiece, OpenNMT and SacreBLEU implementations in our experiments. We trained the 6-layer transformer base model on a single machine with 4 NVIDIA V100 GPUs for 20,000 steps. We use the same dropout rate of 0.1 and label smoothing value of 0.1 for the baseline model and our SSE-enhanced model. Both models have dimensionality of embeddings as $d = 512$. When decoding, we use beam search with beam size of 4 and length penalty of 0.6 and replace unknown words using attention. For both models, we average last 5 checkpoints (we save checkpoints every 10,000 step) and evaluate the model's performances on the test datasets using BLEU scores. The only difference between the two models is whether or not we use our proposed SSE-SE with $p = 0.01$ in Equation~\ref{eq:sse-se} for both encoder and decoder embedding layers.
%{\color{red}(put some of these details in appendix .)}

We summarize our results in Table~\ref{tb:nmt} and find that SSE-SE helps improving accuracy and BLEU scores on both dev and test sets in 10 out of 11 years from 2008 to 2018. In particular, on the last 5 years' test sets from 2014 to 2018, the transformer model with SSE-SE improves BLEU scores by 0.92 on average when compared to the baseline model without SSE-SE.

% \vspace{-.3cm}
\subsection{BERT for Sentiment Classification}\label{sec: bert}
% Can we show SSE improves pre-training stage of Bert? We can try to apply SSE to pre-training on movie reviews data and see how this helps movie sentiment tasks. Use wordNet instead of uniformly random? 

% Pretraining and Fine tuning. SSE can be applied to both. 3 cases: baseline, SSE applied to pretraining only, fine-tuning only and both.

BERT's model architecture \cite{devlin2018bert} is a multi-layer bidirectional Transformer encoder based on the Transformer model in neural machine translation.
%The main architecture difference between BERT and the model in previous section are two fold: first, BERT is bidirectional so there are no masked attention matrices; second, the loss is quite different, in pre-training stage BERT utilizes the sum of two losses: one for the masked language models and the other for the next sentence prediction.
Despite SSE-SE can be used for both pre-training and fine-tuning stages of BERT, we want to mainly focus on pre-training as fine-tuning bears more similarity to the previous section. We use SSE probability of 0.015 for embeddings (one-hot encodings) associated with labels and SSE probability of 0.015 for embeddings (word-piece embeddings) associated with inputs. One thing worth noting is that even in the original BERT model's pre-training stage, SSE-SE is already implicitly used for token embeddings. In original BERT model, the authors masked 15\% of words for a maximum of 80 words in sequences of maximum length of 512 and 10\% of the time replaced the [mask] token with a random token. That is roughly equivalent to SSE probability of 0.015 for replacing input word-piece embeddings. 

% We do not think the BERT authors explained the benefits accurately here. The authors stated their motivation of doing so is to mitigate a mismatch between pre-training and fine-tuning since [mask] token is never seen in fine-tuning stage. But we think it is due to the fact that replacing [mask] token with a random token has the same effects of replacing the tokens' embeddings as SSE-SE does and this would allow us to mitigate over-parameterization in embedding layer and to learn better embedding representations without reducing the number of parameters of the model.

We continue to pre-train Google pre-trained BERT model on our crawled IMDB movie reviews with and without SSE-SE and compare downstream tasks performances. In Table~\ref{tb:bert-imdb}, we find that SSE-SE pre-trained BERT base model helps us achieve the state-of-the-art results for the IMDB sentiment classification task, which is better than the previous best in \cite{howard2018universal}. We report test set accuracy of 0.9542 after fine-tuning for one epoch only. For the similar SST-2 sentiment classification task in Table~\ref{tb:bert-sst2}, we also find that SSE-SE can improve BERT pre-trains better. Our SSE-SE pre-trained model achieves 94.3\% accuracy on SST-2 test set after 3 epochs of fine-tuning while the standard pre-trained BERT model only reports 93.8 after fine-tuning. Furthermore, we show that SSE-SE with SSE probability 0.01 can also improve dev and test accuracy in the fine-tuning stage. If we are using SSE-SE for both pre-training and fine-tuning stage of the BERT base model, we can achieve 94.5\% accuracy on the SST-2 test set, approaching the 94.9\% accuracy by the BERT large model. We are optimistic that our SSE-SE can be applied to BERT large model as well in the future. 

% We also tried to use SSE-SE in an ongoing Kaggle competition (Jigsaw Unintended Bias in Toxicity Classification Competition) and find that SSE-SE can improve the ensemble model's generalization results, helping us rank top 5 in more than 1700 teams. We reported some results in Table~\ref{tb:ensemble} in Appendix but omitted most details.  
% {\color{red}(tricky to say since ranking might change. Also improvement is not very significant so maybe we can remove. )}

% SST2: 128 max sequence length and 3 epochs fine-tuning \\
% IMDB: 512 max sequence length and 1 epochs fine-tuning (2 epochs for google pre-trained model)
% both use learning rate of $2e^{-5}$, dropout probability of $0.1$

\begin{table}
  \caption{Our proposed SSE-SE applied in the pre-training stage on our crawled IMDB data improves the generalization ability of pre-trained IMDB model and helps the BERT-Base model outperform current SOTA results on the IMDB Sentiment Task after fine-tuning.}
  \label{tb:bert-imdb}
  \centering
 \resizebox{0.65\columnwidth}{!}{
  \begin{tabular}{cccc}
    \toprule
    & \multicolumn{3}{c}{IMDB Test Set}                \\
    \cmidrule(r){2-4}  
    Model     & AUC & Accuracy & F1 Score \\
    \midrule
    ULMFiT \cite{howard2018universal} & - & 0.9540 & - \\
    \midrule
    Google Pre-trained Model + Fine-tuning            & 0.9415 &  0.9415    & 0.9419  \\
    Pre-training + Fine-tuning   & 0.9518 & 0.9518 &  0.9523 \\
    (SSE-SE + Pre-training) + Fine-tuning   & \bfseries{0.9542}  & \bfseries{0.9542}    & \bfseries{0.9545}    \\
    \bottomrule
  \end{tabular}
  \vspace{-10pt}
 }
\end{table}

\begin{table}
  \caption{SSE-SE pre-trained BERT-Base models on IMDB datasets turn out working better on the new unseen SST-2 Task as well. }
%   Note that the first two rows result are copied from the BERT paper and our reported results are based on the BERT-Base models.}
  \label{tb:bert-sst2}
  \centering
\resizebox{0.8\columnwidth}{!}{
  \begin{tabular}{ccccc}
    \toprule
    & \multicolumn{3}{c}{SST-2 Dev Set} & \multicolumn{1}{c}{SST-2 Test Set}                 \\
    \cmidrule(r){2-4}  \cmidrule(r){5-5} 
    Model     & AUC & Accuracy & F1 Score & Accuracy (\%) \\
    \midrule 
    % BERT-Base & - &  -  & -  & 93.5 \\
    % BERT-Large  & - &  -  & -  & 94.9  \\
    % \midrule
    Google Pre-trained + Fine-tuning             & 0.9230 &      0.9232  & 0.9253  &   93.6  \\
    Pre-training + Fine-tuning   & 0.9265 & 0.9266 & 0.9281 &   93.8 \\
    (SSE-SE + Pre-training) + Fine-tuning   & 0.9276 &  0.9278  & 0.9295  & 94.3  \\
    (SSE-SE + Pre-training) + (SSE-SE + Fine-tuning) & \bfseries{0.9323} &  \bfseries{0.9323}  & \bfseries{0.9336}  & \bfseries{94.5} \\

    %Reviews Pre-training +  Fine-tuning (SSE)   &  &      &   &  &        &  \\
    %Reviews Pre-training (SSE) + SSE Fine-tuning (SSE)   &  &      &   &  &        &  \\
    \bottomrule
  \end{tabular}
}
\end{table}

\begin{figure*}
\centering
\begin{tabular}{cc}
\hspace{-8pt}
\includegraphics[width=0.46\linewidth]{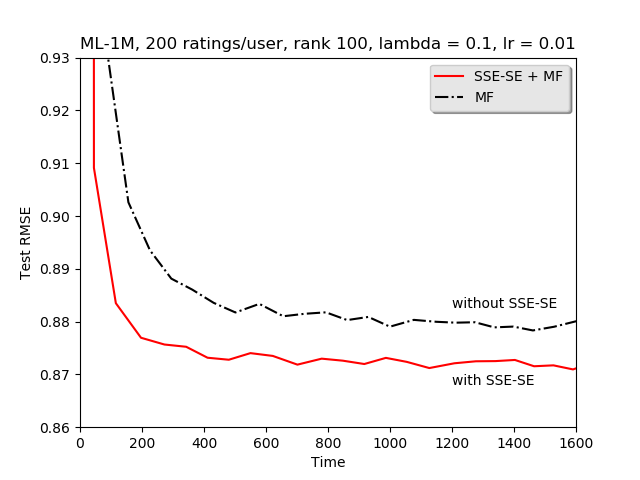} &
%\hspace{-8pt}
\includegraphics[width=0.46\linewidth]{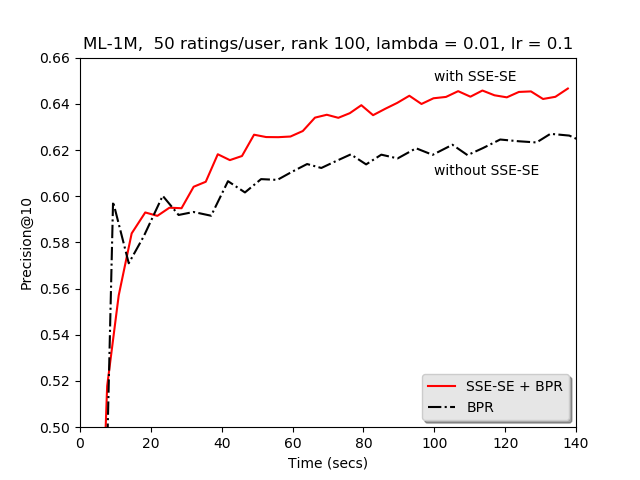} 
\end{tabular}
\vspace{-10pt}\caption{Compare Training Speed of Simple Neural Networks with One Hidden Layer, i.e. MF and BPR, with and without SSE-SE. }
\label{fig:sse-speed}
% \vspace{-10pt}
\end{figure*}

%\subsection{Speed and Convergence Comparisons}
% \vspace{-.3cm}
\subsection{Speed and Convergence Comparisons. }
% Put speed plots here. Two plots side by side.
In Figure~\ref{fig:sse-speed}, it is clear to see that our one-hidden-layer neural networks with SSE-SE are achieving much better generalization results than their respective standalone versions. One can also easily spot that SSE-version algorithms converge at much faster speeds with the same learning rate.

%  \vspace{-10pt}
\section{Conclusion} 
We have proposed Stochastic Shared Embeddings, which is a data-driven approach to regularization, that stands in contrast to brute force regularization such as Laplacian and ridge regularization.  Our theory is a first step towards explaining the regularization effect of SSE, particularly, by `smoothing' the Rademacher complexity.  The extensive experimentation demonstrates that SSE can be fruitfully integrated into existing deep learning applications.

{\bf Acknowledgement.}
Hsieh acknowledges the support of NSF IIS-1719097, Intel faculty award, Google Cloud and Nvidia.

\clearpage
\bibliography{neurips_2019}
\bibliographystyle{neurips_2019}

% [1] Alexander, J.A.\ \& Mozer, M.C.\ (1995) Template-based algorithms for
% connectionist rule extraction. In G.\ Tesauro, D.S.\ Touretzky and T.K.\ Leen
% (eds.), {\it Advances in Neural Information Processing Systems 7},
% pp.\ 609--616. Cambridge, MA: MIT Press.

% [2] Bower, J.M.\ \& Beeman, D.\ (1995) {\it The Book of GENESIS: Exploring
%   Realistic Neural Models with the GEneral NEural SImulation System.}  New York:
% TELOS/Springer--Verlag.

% [3] Hasselmo, M.E., Schnell, E.\ \& Barkai, E.\ (1995) Dynamics of learning and
% recall at excitatory recurrent synapses and cholinergic modulation in rat
% hippocampal region CA3. {\it Journal of Neuroscience} {\bf 15}(7):5249-5262.

\newpage
\section{Appendix}

For experiments in Section~\ref{sec:simple_nn}, we use Julia and C++ to implement SGD. For experiments in Section~\ref{sec:sasrec}, and Section~\ref{sec: bert}, we use Tensorflow and SGD/Adam Optimizer. For experiments in Section~\ref{sec:nmt}, we use Pytorch and Adam with noam decay scheme and warm-up. We find that none of these choices affect the strong empirical results supporting the effectiveness of our proposed methods, especially the SSE-SE. In any deep learning frameworks, we can introduce stochasticity to the original embedding look-up behaviors and easily implement SSE-Layer in Figure~\ref{fig:train_test} as a custom operator. 

\subsection{Neural Networks with One Hidden Layer}
\label{app:mf}
To run SSE-Graph, we need to construct good-quality knowledge graphs on embeddings. We managed to match movies in Movielens1m and Movielens10m datasets to IMDB websites, therefore we can extract plentiful information for each movie, such as the cast of the movies, user reviews and so on. For simplicity reason, we construct the knowledge graph on item-side embeddings using the cast of movies. Two items are connected by an edge when they share one or more actors/actresses. For user side, we do not have good quality graphs: we are only able to create a graph on users in Movielens1m dataset based on their age groups but we do not have any side information on users in Movielens10m dataset. When running experiments, we do a parameter sweep for weight decay parameter and then fix it before tuning the parameters for SSE-Graph and SSE-SE. We utilize different $\rho$ and $p$ for user and item embedding tables respectively. The optimal parameters are stated in Table~\ref{tb:sse-se_sse-graph2} and Table~\ref{tb:mf}. We use the learning rate of 0.01 in all SGD experiments.

In the first leg of experiments, we examine users with fewer than 60 ratings in Movielens1m and Movielens10m datasets. In this scenario, the graph should carry higher importance. One can easily see from Table~\ref{tb:sse-se_sse-graph2} that without using graph information, our proposed SSE-SE is the best performing matrix factorization algorithms among all methods, including popular ALS-MF and SGD-MF in terms of RMSE. With Graph information, our proposed SSE-Graph is performing significantly better than the Graph Laplacian Regularized Matrix Factorization method. This indicates that our SSE-Graph has great potentials over Graph Laplacian Regularization as we do not explicitly penalize the distances across embeddings but rather we implicitly penalize the effects of similar embeddings on the loss.

In the second leg of experiments, we remove the constraints on the maximum number of ratings per user. We want to show that SSE-SE can be a good alternative when graph information is not available. We follow the same procedures in \cite{wu2017large, wu2018sql}. In Table~\ref{tb:mf}, we can see that SSE-SE can be used with dropout to achieve the smallest RMSE across Douban, Movielens10m, and Netflix datasets. In Table~\ref{tb:bpr}, one can see that SSE-SE is more effective than dropout in this case and can perform better than STOA listwise approach SQL-Rank \cite{wu2018sql} on 2 datasets out of 3. 

In Table~\ref{tb:mf}, SSE-SE has two tuning parameters: probability $p_u$ to replace embeddings associated with user-side embeddings and probability $p_i$ to replace embeddings associated with item side embeddings because there are two embedding tables. But here for simplicity, we use one tuning parameter $p_{s} = p_u = p_i$. We use dropout probability of $p_{d}$, dimension of user/item embeddings $d$, weight decay of $\lambda$ and learning rate of $0.01$ for all experiments, with the exception that the learning rate is reduced to $0.005$ when both SSE-SE and Dropout are applied. For Douban dataset, we use $d = 10, \lambda = 0.08$. For Movielens10m and Netflix dataset, we use $d = 50, \lambda = 0.1$.

\subsection{Neural Machine Translation}
We use the transformer model \cite{vaswani2017attention} as the backbone for our experiments. The control group is the standard transformer encoder-decoder architecture with self-attention. In the experiment group, we apply SSE-SE towards both encoder and decoder by replacing corresponding vocabularies' embeddings in the source and target sentences. We trained on the standard WMT 2014 English to German dataset which consists of roughly 4.5 million parallel sentence pairs and tested on WMT 2008 to 2018 news-test sets. Sentences were encoded into 32,000 tokens using a byte-pair encoding. We use the SentencePiece, OpenNMT and SacreBLEU implementations in our experiments. We trained the 6-layer transformer base model on a single machine with 4 NVIDIA V100 GPUs for 20,000 steps. We use the same dropout rate of 0.1 and label smoothing value of 0.1 for the baseline model and our SSE-enhanced model. Both models have dimensionality of embeddings as $d = 512$. When decoding, we use beam search with the beam size of 4 and length penalty of 0.6 and replace unknown words using attention. For both models, we average last 5 checkpoints (we save checkpoints every 10,000 steps) and evaluate the model's performances on the test datasets using BLEU scores. The only difference between the two models is whether or not we use our proposed SSE-SE with $p = 0.01$ in Equation~\ref{eq:sse-se} for both encoder and decoder embedding layers.

\subsection{BERT}
In the first leg of experiments, we crawled one million user reviews data from IMDB and pre-trained the BERT-Base model (12 blocks) for $500,000$ steps using sequences of maximum length 512 and batch size of 8, learning rates of $2e^{-5}$ for both models using one NVIDIA V100 GPU. Then we pre-trained on a mixture of our crawled reviews and reviews in IMDB sentiment classification tasks (250K reviews in train and 250K reviews in test) for another $200,000$ steps before training for another $100,000$ steps for the reviews in IMDB sentiment classification task only. In total, both models are pre-trained on the same datasets for $800,000$ steps with the only difference being our model utilizes SSE-SE. In the second leg of experiments, we fine-tuned the two models obtained in the first-leg experiments on two sentiment classification tasks: IMDB sentiment classification task and SST-2 sentiment classification task. The goal of pre-training on IMDB dataset but fine-tuning for SST-2 task is to explore whether SSE-SE can play a role in transfer learning.

The results are summarized in Table~\ref{tb:bert-imdb} for IMDB sentiment task. In experiments, we use maximum sequence length of 512, learning rate of $2e^{-5}$, dropout probability of $0.1$ and we run fine-tuning for 1 epoch for the two pre-trained models we obtained before. For the Google pre-trained BERT-base model, we find that we need to run a minimum of 2 epochs. This shows that pre-training can speed up the fine-tuning. We find that Google pre-trained model performs worst in accuracy because it was only pre-trained on Wikipedia and books corpus while ours have seen many additional user reviews. We also find that SSE-SE pre-trained model can achieve accuracy of 0.9542 after fine-tuning for one epoch only. On the contrast, the accuracy is only 0.9518 without SSE-SE for embeddings associated with output $y_i$. 

For the SST-2 task, we use maximum sequence length of 128, learning rate of $2e^{-5}$, dropout probability of 0.1 and we run fine-tuning for 3 epochs for 
all 3 models in Table~\ref{tb:bert-sst2}. We report AUC, accuracy and F1 score for dev data. For test results, we submitted our predictions to Glue website for the official evaluation. We find that even in transfer learning, our SSE-SE pre-trained model still enjoys advantages over Google pre-trained model and our pre-trained model without SSE-SE. Our SSE-SE pre-trained model achieves 94.3\% accuracy on SST-2 test set versus 93.6 and 93.8 respectively. If we are using SSE-SE for both pre-training and fine-tuning, we can achieve 94.5\% accuracy on the SST-2 test set, which approaches the 94.9 score reported by the BERT-Large model. SSE probability of 0.01 is used for fine-tuning.

% \begin{table}
%   \caption{Our proposed SSE-SE helps the ensemble model perform better during Kaggle Jigsaw Unintended Bias in Toxicity Classification Competition. We report the public leaderboard scores. Ensemble model consists 4 single models initialized from the different pre-training checkpoints. }
%   \label{tb:ensemble}
%   \centering
%   \resizebox{0.8\columnwidth}{!}{
%   \begin{tabular}{ccc}
%     \toprule
%     & \multicolumn{2}{c}{Bias AUC}                   \\
%     \cmidrule(r){2-3} 
%     Model     & Single Model & Ensemble Model   \\
%     \midrule
%     Multi-task De-bias Bert            & 0.94132 &      0.94192  \\
%     SSE-SE + Multi-task De-bias Bert   & \bfseries{0.94139} & \bfseries{0.94216} \\
%     \bottomrule
%   \end{tabular}
%   }
% \end{table}

%\section{Appendix}
\subsection{Proofs}

\begin{figure}
  \centering
  \includegraphics[width=0.835\linewidth]{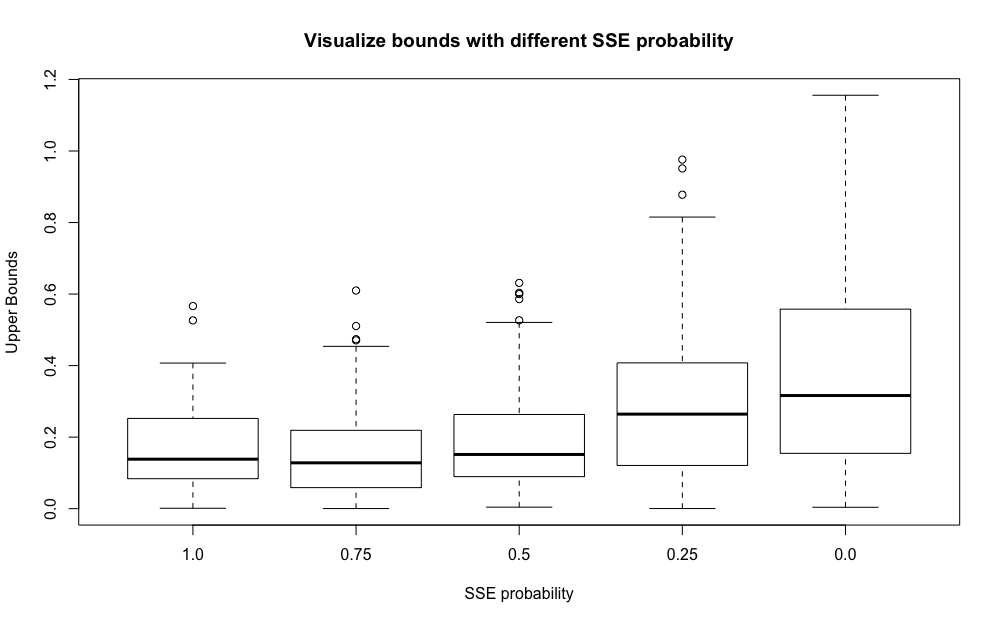}
  \caption{Simulation of a bound on $\rho_{L,n}$ for the movielens1M dataset.  Throughout the simulation, $L$ is replaced with $\ell$ (which will bound $\rho_{L,n}$ by Jensen's inequality).  The SSE probability parameter dictates the probability of transitioning.  When this is $0$ (box plot on the right), the distribution is that of the samples from the standard Rademacher complexity (without the sup and expectation).  As we increase the transition probability, the values for $\rho_{L,n}$ get smaller.
%   The actual transition probabilities to be used is determined by the knowledge graphs and specific datasets and requires only minimal tuning.
  }\vspace{-12pt}
  \label{fig:sim}
\end{figure}

Throughout this section, we will suppress the probability parameters, $p(.,.|\Phi) = p(.,.)$.

\begin{proof}[Proof of Theorem \ref{thm:rademacher}]
Consider the following variability term,
\begin{equation}
   \sup_\Theta | S(\Theta) - S_n(\Theta) |. 
\end{equation}
Let us break the variability term into two components
\[
\mathbb{E}_{X,Y} \sup_\Theta \left| S_n(\Theta) - \mathbb E_{Y|X} [S_n(\Theta)] \right| + \mathbb{E}_{X,Y} \sup_\Theta \left| \mathbb E_{Y|X} [S_n(\Theta)] - S(\Theta) \right|,
\]
where $X,Y$ represent the random input and label.
To control the first term, we introduce a ghost dataset $(x_{i}, y_{i}')$, where $y_{i}'$ are independently and identically distributed according to $y_i | x_i$.
Define
\begin{equation}
    S'_n(\Theta) = \sum_{i} \sum_{\bo k} p(\bo j^i, \bo k) \ell(\bo E[\bo k],y_i'|\Theta)
\end{equation} be the empirical SSE risk with respect to this ghost dataset.
 
 We will rewrite $\mathbb E_{Y|X} [S_n(\Theta)]$ in terms of the ghost dataset and apply Jensen's inequality and law of iterated conditional expectation:
\begin{align}
    &\mathbb{E} \sup_\Theta \left| \mathbb E_{Y|X} [S_n(\Theta)] -  S_n(\Theta) \right| \\
    &= \mathbb{E} \sup_\Theta \left| \mathbb{E}_{Y' | X} \left[S'_n(\Theta) - S_n(\Theta) \right] \right| \\
    &\leq \mathbb{E}\mathbb{E}_{Y' | X} \left[ \sup_\Theta \left| S'_n(\Theta) - S_n(\Theta) \right| \right] \\
    &= \mathbb{E} \sup_\Theta  \left| S'_n(\Theta) - S_n(\Theta) \right|.
\end{align}

Notice that 
\begin{align*}
S'_n(\Theta) - S_n(\Theta) = \sum_{i} \sum_{\bo k} p(\bo j^i, \bo k) \left( \ell(\bo E[\bo k],y_i'|\Theta) - \ell(\bo E[\bo k],y_i|\Theta)\right)\\
= \sum_{i} \sum_{\bo k} p(\bo j^i, \bo k) \left( e(\bo E[\bo k],y_i'|\Theta) - e(\bo E[\bo k],y_i|\Theta)\right).
\end{align*}
Because $y_i, y_i' | X$ are independent the term $(\sum_{\bo k} p(\bo j^i, \bo k) \left( e(\bo E[\bo k],y_i'|\Theta) - e(\bo E[\bo k],y_i|\Theta)\right))_i$ is a vector of symmetric independent random variables.
Thus its distribution is not effected by multiplication by arbitrary Rademacher vectors $\sigma_i \in \{-1,+1\}$.
\[
\mathbb{E} \sup_\Theta  \left| S'_n(\Theta) - S_n(\Theta) \right| = \mathbb{E} \sup_\Theta \left| \sum_{i} \sigma_i \sum_{\bo k} p(\bo j^i, \bo k) \left( e(\bo E[\bo k],y_i'|\Theta) - e(\bo E[\bo k],y_i|\Theta)\right) \right|.
\]
But this is bounded by 
\[
2 \mathbb{E} \mathbb E_\sigma \sup_\Theta \left| \sum_{i} \sigma_i \sum_{\bo k} p(\bo j^i, \bo k) e(\bo E[\bo k],y_i|\Theta) \right|.
\]

For the second term,
\[
\mathbb{E} \sup_\Theta \left| \mathbb E_{Y|X} [S_n(\Theta)] - S(\Theta) \right|
\]
we will introduce a second ghost dataset $x_i',y_i'$ drawn iid to $x_i,y_i$.
Because we are augmenting the input then this results in a new ghost encoding $\bo j'^{i}$.
Let
\begin{equation}
    S'_n(\Theta) = \sum_{i} \sum_{\bo k} p(\bo j'^{i}, \bo k) \ell(\bo E[\bo k],y_i'|\Theta)
\end{equation} be the empirical risk with respect to this ghost dataset.
Then we have that 
\[
S(\Theta) = \mathbb E_{X'} \mathbb E_{Y' | X'} S_n'(\Theta) 
\]
Thus,
\begin{align}
    &\mathbb{E} \sup_\Theta \left| \mathbb E_{Y|X} [S_n(\Theta)] -  S(\Theta) \right| \\
    &= \mathbb{E} \sup_\Theta \left| \mathbb{E}_{X'} \left[E_{Y|X} [S_n(\Theta)] - E_{Y'|X'} [S'_n(\Theta)] \right] \right| \\
    &\leq \mathbb{E}\mathbb{E}_{X'} \left[ \sup_\Theta \left| E_{Y|X} [S_n(\Theta)] - E_{Y'|X'} [S'_n(\Theta)] \right| \right] \\
    &= \mathbb{E} \sup_\Theta \left| E_{Y|X} [S_n(\Theta)] - E_{Y'|X'} [S'_n(\Theta)] \right|.
\end{align}
Notice that we may write,
\[
E_{Y|X} [S_n(\Theta)] - E_{Y'|X'} [S'_n(\Theta)] = \sum_{i} \sum_{\bo k} \left( p(\bo j^{i}, \bo k) - p(\bo j'^{i}, \bo k) \right) L(\bo E[\bo k]|\Theta)
\]
Again we may introduce a second set of Rademacher random variables $\sigma'_i$, which results in 
\[
\mathbb{E} \sup_\Theta \left| E_{Y|X} [S_n(\Theta)] - E_{Y'|X'} [S'_n(\Theta)] \right| \le 2 \mathbb E \mathbb E_{\sigma'} \sup_\Theta \left| \sum_{i} \sigma'_i \sum_{\bo k} p(\bo j^{i}, \bo k) L(\bo E[\bo k]|\Theta) \right|.
\]
And this is bounded by 
\[
2 \mathbb E \mathbb E_{\sigma'} \sup_\Theta \left| \sum_{i} \sigma'_i \sum_{\bo k} p(\bo j^{i}, \bo k) L(\bo E[\bo k]|\Theta) \right| \le 2 \mathbb E \sup_\Theta \left| \sum_{i} \sigma'_i \sum_{\bo k} p(\bo j^{i}, \bo k) \ell(\bo E[\bo k],y_i|\Theta) \right|
\]
by Jensen's inequality again.
\end{proof}

\begin{proof}[Proof of Theorem \ref{thm:main}]
It is clear that $2 \mathcal B \ge B(\hat \Theta) + B(\Theta^*)$.
It remains to show our concentration inequality.
Consider changing a single sample, $(x_i,y_i)$ to $(x_i',y_i')$, thus resulting in the SSE empirical risk, $S_{n,i}(\Theta)$.
Thus,
\begin{align*}
&S_n(\Theta) - S_{n,i}(\Theta) = \sum_{\bo k} p(\bo j^i, \bo k) \cdot \ell(\bo E[\bo k],y_i|\Theta) - \sum_{\bo k} p(\bo j'^i, \bo k) \cdot \ell(\bo E[\bo k],y_i'|\Theta)\\
&= \sum_{\bo k} p(\bo j^i, \bo k) \cdot \left( \ell(\bo E[\bo k],y_i|\Theta) - \ell(\bo E[\bo k],y_i'|\Theta)\right) + \sum_{\bo k} \left( p(\bo j'^i, \bo k) - p(\bo j^i, \bo k) \right) \cdot \ell(\bo E[\bo k],y_i'|\Theta)\\
&\le b \left( \sum_{\bo k} p(\bo j^i, \bo k) + \sum_{\bo k} p(\bo j'^i, \bo k) \right) \le 2b.
\end{align*}
Then the result follows from McDiarmid's inequality.

\end{proof}

\end{document}